\documentclass[journal,compsoc]{IEEEtran}
\usepackage{amsfonts}
\usepackage{amssymb}
\usepackage{array, makecell}
\usepackage{mathtools}
\usepackage{graphicx}
\usepackage{subfigure}
\usepackage{enumerate}
\usepackage{amsmath}
\usepackage{color}
\usepackage{amsthm}
\usepackage[colorlinks,linkcolor=black,anchorcolor=black,citecolor=black]{hyperref}
\usepackage{algorithm}
\usepackage{algpseudocode}
\usepackage{stfloats}
\usepackage{multirow}
\theoremstyle{definition}
\usepackage[bottom=1.18in,left=0.65in,right=0.7in,top=0.67in]{geometry}
\usepackage{color}
\usepackage{url}
\allowdisplaybreaks[4]

\ifCLASSOPTIONcompsoc
  \usepackage[nocompress]{cite}
\else
  \usepackage{cite}
\fi

\ifCLASSINFOpdf

\else

\fi

\begin{document}
\newtheorem{theorem}{Theorem}
\newtheorem{lemma}{Lemma}
%

\title{Online LLM Selection via Constrained Bandits with Time-Varying Demand}

\author{Yin Huang ~\IEEEmembership{Student Member,~IEEE}
           ~~Qingsong Liu~\IEEEmembership{Member,~IEEE}
       ~~Jie~Xu~\IEEEmembership{Senior Member,~IEEE}
\IEEEcompsocitemizethanks{\IEEEcompsocthanksitem Y. Huang and J. Xu are with the Department of Electrical and Computer Engineering, University of Florida.
Email: \{yin.huang, jie.xu\}@ufl.edu.

\IEEEcompsocthanksitem Q. Liu is with the Manning College of Information and Computer Sciences, University of Massachusetts Amherst. Email: qingsongliu@umass.edu.
}}

\IEEEtitleabstractindextext{
\begin{abstract}
Large Language Models (LLMs) are increasingly deployed in edge–cloud inference systems to handle diverse user tasks with heterogeneous accuracy, latency, and cost profiles. Selecting the appropriate LLM for each incoming task is critical for ensuring service quality and efficient resource utilization. However, model heterogeneity, stochastic and unknown performance characteristics, and time-varying task demands make static selection strategies inadequate. Real-world deployments often impose hard resource budgets such as monetary expenditure limits, along with soft service-level requirements such as latency guarantees. These constraints introduce additional challenges for online decision-making. We formulate this problem as a constrained stochastic bandit learning task, where the learner sequentially selects models under both packing-type (hard) and covering-type (soft) constraints, while adapting to time-varying task demand. The learner operates without access to the underlying reward, cost, or latency distributions and must rely on partial feedback. We develop a novel online learning algorithm that leverages confidence-bound estimates and demand predictions to balance reward maximization with long-term constraint satisfaction. We provide theoretical guarantees showing sublinear regret and sublinear covering constraint violations compared to an offline benchmark with full information. Experimental results on synthetic workloads demonstrate the effectiveness and robustness of our approach in dynamic, resource-constrained environments.
\end{abstract}
\begin{IEEEkeywords} Collaborative Edge Inference, Multi-armed Bandits, LLM Selection, Bandits Constraints.
\end{IEEEkeywords}}

\maketitle
\IEEEdisplaynontitleabstractindextext
\IEEEpeerreviewmaketitle
\IEEEraisesectionheading{\section{Introduction}}
\IEEEPARstart{L}arge language models (LLMs) are increasingly deployed in real-world systems to support tasks such as generation, summarization, and question answering. In collaborative edge-cloud inference systems, a mobile device offloads incoming requests to a cloud server, where it can select from a pool of pre-deployed LLMs. A central challenge is choosing the most appropriate model to serve incoming tasks, given the heterogeneity in model capabilities. Most existing approaches rely on static selection, fixing one LLM throughout deployment~\cite{owodunni2023koya,peng2023check}. However, in new or evolving scenarios, such one-size-fits-all approach often underperform. Due to generation diversity~\cite{chen2023frugalgpt}, smaller models may be better suited for certain prompts, while data drift~\cite{bhardwaj2022ekya} may alter task characteristics over time. These issues call for an online model selection~\cite{xia2024llm} approach that adapts to the current workload. Importantly, different LLMs often exhibit varying and uncertain accuracy, latency, and monetary cost. This makes the online model selection problem a natural fit for a bandit learning framework, where the system must balance exploration and exploitation while adapting to stochastic reward and cost feedback.

An additional challenge in online LLM selection lies in satisfying multiple resource constraints, which often vary in both form and severity. These constraints can be broadly classified along two dimensions. First, a constraint can be hard, meaning it must be satisfied at all times, or else the system halts or soft, meaning occasional violations are allowed as long as performance remains acceptable on average. Second, constraints can be of packing or covering type: a packing constraint enforces that cumulative resource usage does not exceed a fixed budget (i.e., $\le$), while a covering constraint requires that cumulative service provision meets a minimum threshold (i.e., $\ge$)~\cite{li2019combinatorial,slivkins2023contextual}. Our problem involves both types. First, LLMs typically incur monetary cost based on the number of tokens they generate this cost is stochastic, as token counts vary across tasks and models~\cite{sun2025invisible,husom2025sustainable}. In many deployments, users or providers face a fixed budget, and exceeding it may terminate service. This defines a hard packing constraint. Second, many applications impose a latency Service-Level Agreement (SLA), which demands that a certain fraction of tasks be completed within a target deadline~\cite{yu2022deterministic}. Occasional violations are tolerable, but too many missed deadlines degrades user experience. This defines a soft covering constraint. These bidirectional constraints reflect fundamentally different goals, cost containment vs. service quality, and demand tailored algorithmic handling.

A third challenge arises from the dynamic and uncertain nature of task arrival rates across time. In edge-cloud inference systems, the number of tasks arriving in each time slot may vary significantly due to user activity, temporal patterns, or external conditions~\cite{balseiro2023online}. To ensure low overhead and reduce cold-start delays, a single model is typically selected at the start of each slot to serve all tasks that arrive during that interval. While this preselection avoids per-task switching costs, it makes model decisions particularly sensitive to variations in task volume. Ignoring such fluctuations can lead to degraded performance: lightweight load may tolerate slower or cheaper models, whereas heavy load may require faster, more costly ones to meet latency constraints.  Fortunately, forecast information is often available in practice for example, from historical workload traces or predictive demand models and can be leveraged to anticipate demand sizes before decision-making. Integrating such predictions into the learning algorithm allows more informed and robust model selection as conditions change, particularly under budget constraints or time-sensitive SLAs.

In this paper, we formulate the online LLM model selection problem for a task offloading scenario in a heterogeneous edge-cloud inference system, explicitly addressing the aforementioned challenges. We consider a setting with multiple available LLMs (e.g., different model sizes or providers at edge and cloud), where tasks arrive continuously over time and must be assigned to appropriate models for inference. Our formulation captures three key aspects: (i) the need to dynamically select among candidate models at each decision point rather than committing to a static allocation, (ii) the existence of a hard budget constraint on token-based inference costs alongside a soft latency SLA that ensures quality of service, and (iii) the presence of time-varying task volumes that influence both cost and responsiveness. By integrating these elements, the proposed framework generalizes classical offloading and scheduling formulations to the setting of LLM inference, where monetary efficiency and latency guarantees are primary design considerations.

To solve this online model selection problem, we cast it as a constrained stochastic multi-armed bandit instance, akin to the bandits with knapsacks framework, where the system faces both a hard budget constraint and a soft latency SLA, corresponding to packing- and covering-type constraints, respectively.We develop a new learning algorithm, COPAC-UCB, which integrates confidence-guided estimation, being optimistic (via upper confidence bounds or UCB) for reward and covering-type constraints, and pessimistic (via lower confidence bounds, or LCB) for packing-type constraints with Lagrangian regularization to adaptively balance performance and feasibility under uncertainty. To improve responsiveness under demand variability, COPAC-UCB incorporates a forecasting component that estimates the cumulative task load using a black-box predictor. These demand estimates help augment decisions to more effectively satisfy constraints. Meanwhile, the algorithm updates dual variables (virtual prices) for each constraint via online gradient descent, effectively discouraging models that risk violating the budget or SLA. By combining predictive guidance with optimistic exploration and adaptive cost signals, COPAC-UCB balances reward maximization with safe resource usage over time. The key contributions of this paper are summarized as follows:
\begin{itemize}
\item We formulate a new online LLM model selection problem in edge-cloud collaborative inference systems, explicitly incorporating time-varying task demand, monetary and latency constraints, and model heterogeneity.
\item We cast this problem as a stochastic bandit with both packing and covering constraints, and propose a novel algorithm, COPAC-UCB, which combines confidence-bound estimates with Lagrangian-based resource balancing. The algorithm also leverages predictions of future task load to guide cost-aware model selection under dynamic demand.
\item We theoretically show that COPAC-UCB achieves sublinear regret while satisfying budget and latency constraints with high probability, ensuring safe and efficient decision-making over time.
\item We evaluate our algorithm through simulations on realistic workload traces, demonstrating that COPAC-UCB achieves higher utility and better latency compliance under budget than several competitive baselines.
\end{itemize}

\section{Related Work}

\textbf{Inference Selection and Offloading Optimization} Recent years have seen growing interest in improving LLM inference efficiency via adaptive selection and offloading. Traditional methods~\cite{owodunni2023koya,peng2023check,salazar2019masked,huang2025thriftllm} often adopt a static model, chosen offline by benchmark scores or perplexity, but ignore input-dependent variability across LLMs. More recent work proposes online learning-based model selection to balance accuracy, latency, and monetary cost under uncertainty. For example,~\cite{xia2024llm} models dynamic performance drift via a time-varying bandit framework. In parallel, offloading methods decide whether to run tasks locally, at the edge, or in the cloud, optimizing for latency, energy, or budget~\cite{liu2024resource,al2024regret,yuan2025local,dai2024cost,ling2024multi,yang2025generalizable,dai2023offloading,huang2023adversarial,huang2025learning}. Among them,~\cite{dai2024cost} introduces a combinatorial bandit approach for LLM placement under cost constraints. These lines of work underscore the value of principled online decision-making for LLM inference in dynamic, resource-constrained settings. However, none of these works jointly address both packing and covering constraints or explicitly incorporate time-varying demand, which are central to our formulation.

\textbf{Bandits with Arm Selection Constraints}
Bandits with arm selection constraints model online decision-making where each action yields a reward while consuming resources or satisfying additional constraints. A canonical example is the Bandits with budget constraint framework~\cite{badanidiyuru2018bandits}, where each arm incurs stochastic rewards and resource consumption, and the goal is to maximize cumulative reward before exhausting a fixed budget. Subsequent work~\cite{agrawal2019bandits} developed primal-dual and UCB-based algorithms with sublinear regret guarantees under stationary settings. More recent extensions introduce covering-type constraints to capture fairness or service-level guarantees. For instance,~\cite{joseph2016fairness,gillen2018online,xu2020combinatorial} enforce long-term proportionality across arms or coverage of service across groups, modeled as soft constraints that require cumulative satisfaction over time. These settings typically balance reward maximization with constraint violation minimization. Other lines of work explore adversarial~\cite{immorlica2022adversarial,huang2024adversarial}, non-stationary~\cite{cheung2019learning}, and time-varying settings with oracle predictions~\cite{lyu2023bandits,huang2024quantum}, showing that side information can improve performance. However, existing models rarely address both packing and covering constraints simultaneously, and do not consider time-varying, stochastic demand. Our work fills this gap by formulating and analyzing a setting that integrates all three dimensions. 

\textbf{Learning-augmented online algorithm design}
Our work connects to the literature on learning-augmented online algorithms, which leverage machine-learned predictions to enhance online decision-making \cite{lykouris2021competitive, mitzenmacher2022algorithms}. These methods improve performance by incorporating predictions on latent parameters, while maintaining robustness to adversarial inputs. Applications span caching \cite{lykouris2021competitive}, scheduling \cite{mitzenmacher2022algorithms, lattanzi2020online}, rent-or-buy \cite{purohit2018improving}, set cover \cite{bamas2020primal, almanza2021online}, and matching \cite{antoniadis2020secretary}, often assuming a static prediction known at the start of the horizon. Recent work explores sequential predictions updated at each round \cite{rakhlin2013online, rakhlin2013optimization, steinhardt2014adaptivity, jadbabaie2015online}, primarily in full-feedback or contextual bandit settings. However, these models do not incorporate resource constraints. We extend this paradigm by combining sequential predictions with mixed hard and soft constraints, introducing new algorithmic challenges.

\section{System Model}
We consider a mobile edge–cloud inference system where a user device offloads LLM inference tasks to a cloud-based service. Due to limited on-device compute and memory, the device cannot run large-scale LLMs locally and relies on the cloud for inference. The system proceeds in discrete rounds indexed by \( t = 1, 2, \dots, T \), each representing a fixed-duration time slot. At the start of each round $t$, the system selects a single LLM \( a_t \in \mathcal{A} \) from a pool of pre-deployed models to handle tasks arriving during the slot. This preselection avoids frequent model switches and reduces cold-start overhead. The task load \( q_t \) in round \( t \) is unknown at selection time and may fluctuate with user activity or application demand. Although the learner must adapt without knowing \( q_t \) in advance, such variations often follow structured patterns (e.g., time-of-day or seasonal trends) and can be predicted from historical data or external signals~\cite{rakhlin2013online,rakhlin2013optimization}. These predictions help guide model selection and resource planning.

\textbf{Model Performance and Cost.}
Each model \( a \in \mathcal{A} \) in the cloud has a stochastic cost and performance profile that is only revealed after it is selected. In each round \( t \), the system selects a model \( a_t \) to serve tasks arriving during the time slot. The selected model then yields a random per-task reward \( r_t(a_t) \in [0,1] \), capturing task accuracy or success rate, and a latency \( \ell_t(a_t) \), which includes model inference time and system delay. Recall that \( q_t \) is the received number of tasks during round \( t \), the total reward and latency incurred in this round are \( q_t \cdot r_t(a_t) \) and \( q_t \cdot \ell_t(a_t) \), respectively.

Each model \( a \) incurs a monetary cost based on a fixed per-token price \( \rho_a > 0 \), specified by the service provider. The number of tokens generated per task varies with prompt complexity and decoding behavior. Let \( \tau_t^i(a) \) denote the token usage of model \( a \) on task \( i \) in round \( t \), drawn from a bounded distribution with unknown mean~\cite{husom2025sustainable}. The total token usage in round \( t \) is \( \tau_t(a) = \sum_{i=1}^{q_t} \tau_t^i(a) \), and the corresponding cost is \( \rho_{a_t} \cdot \tau_t(a_t) \), revealed only after execution.  The learner must choose models without prior knowledge of realized reward, latency, or token usage, relying on historical feedback \( (r_t(a_t), \ell_t(a_t), \tau_t(a_t)) \) to balance accuracy, latency, and cost.

\textbf{Constraints.}
The system operates under two global constraints that govern model selection throughout the decision horizon. First, a hard monetary budget constraint limits the cumulative token-based cost. Formally, the total incurred cost must satisfy $\sum_{t=1}^{T} \rho_{a_t} \cdot \tau_t(a_t) \leq B$. This reflects operational or financial limits, such as API usage quotas or cloud usage budgets. Once the budget is exhausted, meaning the cumulative cost exceeds \(B\), the system is prohibited from using any LLMs. It continues to process incoming workloads until round \(T\), but can only select a default no-operation action that produces zero reward and incurs zero cost. This represents a fallback mode where the system remains functional but consumes no additional resources.

Second, a soft latency service-level agreement imposes a covering constraint on user-perceived responsiveness~\cite{yu2022deterministic}. At least a fraction \( \alpha \in (0,1] \) of all tasks must complete within a latency deadline \( d_0 \). Since all \( q_t \) tasks in round \( t \) share the same model latency \( \ell_t(a_t) \), the SLA constraint is: $\sum_{t=1}^{T} q_t \cdot \mathbf{1}\{\ell_t(a_t) \leq d_0\} \geq \alpha \cdot \sum_{t=1}^{T} q_t.$ The indicator is treated as zero when a no-operation action is selected, ensuring only effective decisions contribute to SLA compliance. While occasional violations are permitted, persistent delays significantly degrade user experience. As latency is stochastic and only observed after selection, the learner must balance responsiveness against reward and cost. See Fig.~1 for a system overview.

\textbf{Objective.}
The goal of the system is to maximize the expected cumulative user-perceived reward over a fixed decision horizon of \(T\) rounds, while satisfying both a hard monetary budget constraint and a soft latency SLA. Formally, the learner seeks to solve the following constrained online optimization problem:
\[
\begin{aligned}
\max_{a_1, \dots, a_T} \quad & \mathbb{E} \left[ \sum_{t=1}^{T} q_t \cdot r_t(a_t) \right] \\
\text{subject to} \quad 
& \sum_{t=1}^{T} \rho_{a_t} \cdot \tau_t(a_t) \leq B, \quad \text{(monetary budget)} \\
\sum_{t=1}^{T} q_t \cdot &\mathbf{1}\{\ell_t(a_t) \leq d_0\} \geq \alpha \cdot \sum_{t=1}^{T} q_t.  \text{ (latency SLA)}
\end{aligned}
\]
As described earlier, any no-operation decision contributes zero reward and is excluded from SLA satisfaction.

The expectation is taken over both the learner’s internal randomness and the stochasticity of reward, latency, and token generation. This formulation captures the core trade-off between reward maximization and constraint satisfaction under uncertainty. Since the performance and cost profiles of each model are unknown and revealed only through interaction, the problem aligns naturally with a constrained bandit framework. The learner must explore model behavior, adapt to time-varying task loads, and make decisions under partial feedback, all while respecting the budget and service constraints.

\begin{figure}[tb]
    \centering
    \includegraphics[width=0.5\linewidth]{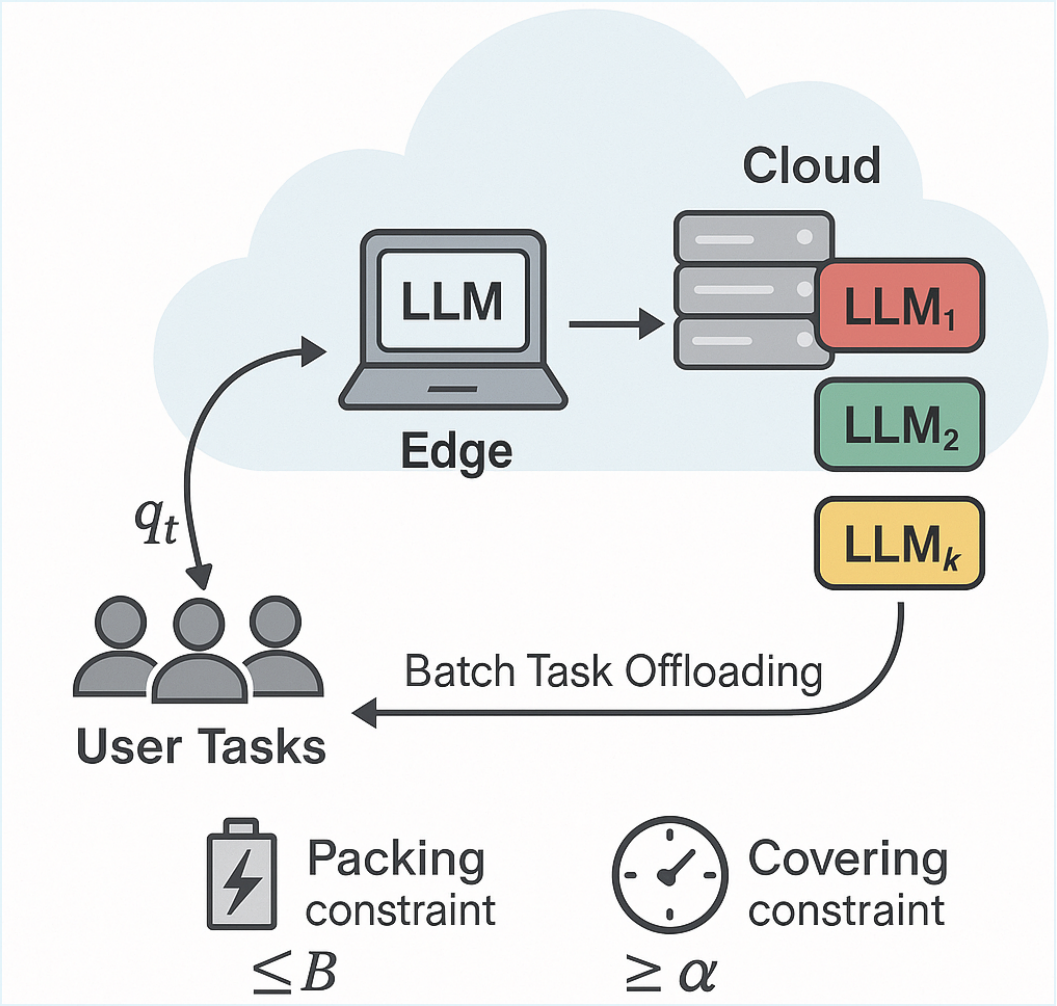}
    \caption{Online LLM selection under Budget Constraint} \label{fig:LLM-trends}
\end{figure}

\section{Time Varying Demands Bandit Formulation with General Constraints.}
We formulate the model selection problem as a Time Varying Demands Bandit Formulation with General Constraints. While this framework is motivated by the above model selection problem, it is more general and can be applied to a broader class of constrained online decision-making problems with time-varying demand, making it of independent interest. In each round \( t \in \{1, \dots, T\} \), the learner selects an action \( a_t \in \mathcal{A} \) from a finite set of LLM models to process the incoming tasks. The learner then observes a stochastic reward \( r_t(a_t) \in [0,1] \), capturing average task utility such as accuracy or success rate.

While our system specifically involves inference latency and monetary cost, many practical deployments also face other resource pressures, such as energy usage, memory access, or bandwidth limitations. To capture this broader class of constraints in a unified manner, we introduce a \( d \)-dimensional cost vector \( \mathbf{z}_t(a_t) \in [0,1]^d \), where each coordinate represents the per-task consumption of a specific resource type. The total cost in round \( t \) is scaled by the realized task load as \( q_t \cdot \mathbf{z}_t(a_t) \). This abstraction allows us to handle heterogeneous system costs within a single learning framework and sets the stage for constraint-aware decision-making.

\textbf{Constraint Structure.}
We categorize the \( d \)-dimensional cost vector \( \mathbf{z}_t(a_t) \in [0,1]^d \) into two types of constraints: packing constraints \( \mathcal{I}_{\text{pack}} = \{1, \dots, d_1\} \) with hard budgets \( B_i > 0 \); and covering constraints \( \mathcal{I}_{\text{cov}} = \{d_1 + 1, \dots, d\} \) with soft lower thresholds, where \( d = d_1 + d_2 \). For each packing constraint \( i \in \mathcal{I}_{\text{pack}} \), the system halts if the cumulative usage exceeds budget \( B_i \) at any round \( \tau \), i.e., \( \sum_{t=1}^{\tau} q_t \cdot z_t^{(i)}(a_t) > B_i \). For each covering constraint \( i \in \mathcal{I}_{\text{cov}} \), the learner must ensure the cumulative usage satisfies \( \sum_{t=1}^{T} q_t \cdot z_t^{(i)}(a_t) \geq B_i \). For instance, the latency SLA introduced earlier can be represented as a special case where $ z_t^{(i)}(a_t) := \mathbf{1}\{\ell_t(a_t) \leq d_0 \}$, and  $B_i = \alpha \cdot \sum_{t=1}^{T} q_t$. 

\textbf{Objective.} The learner does not observe the underlying reward or cost distributions, nor the future demand sequence $\{q_t\}_{t=1}^T$. It must adaptively select actions based only on past observations. The goal is to maximize the total cumulative reward over rounds $\{1, \dots, \tau-1\}$, where $\tau$ denotes the first round in which any packing constraint is violated, while ensuring all covering constraints are satisfied over the horizon.

\textbf{Benchmark.} To evaluate algorithm performance, we benchmark against a clairvoyant offline policy that knows all latent distributions $\{r_t(a), z_t^{(i)}(a)\}$ and the entire demand sequence $\{q_t\}_{t=1}^T$ in advance. Let $\text{OPT}$ denote the expected total reward achieved by this dynamic oracle policy, which can adaptively choose the best action in each round using full information, including future outcomes. Given an online learner that selects actions $a_t$ without access to latent reward or cost distributions, we define the regret at horizon $T$ as the gap between the offline dynamic oracle and the learner’s cumulative expected reward:
\begin{equation}
    \text{Regret}(T) := \text{OPT} - \mathbb{E}\left[ \sum_{t=1}^{\tau - 1} q_t \cdot r_t(a_t) \right],
\end{equation}
where  $\tau$ is the first round that violates any packing constraint.

To facilitate analysis, we upper bound $\text{OPT}$ by introducing a linear programming (LP) relaxation based on a static randomized policy. Let $\mathbf{u} \in \Delta_{|\mathcal{A}|}$ be a fixed probability distribution over actions, $\mathbf{r} \in \mathbb{R}^{|\mathcal{A}|}$ the vector of expected per-task rewards (i.e. $\mathbf{r} = \mathbb{E}[r_t(a)]$), and $\mathbf{z}^{(i)} \in \mathbb{R}^{|\mathcal{A}|}$ the expected per-task consumption along dimension $i \in [d]$ (i.e., $\mathbf{z}^{(i)}(a) = \mathbb{E}[z_t^{(i)}(a)]$). Let $Q := \sum_{t=1}^T q_t$ be the total task volume. Then the LP relaxation benchmark is defined as:
\begin{align}
\text{OPT}_{\text{LP}} = &\max_{\mathbf{u} \in \Delta_{|\mathcal{A}|}} \quad  Q \cdot \mathbf{r}^\top \mathbf{u} \nonumber \\
& \text{s.t.} \quad 
 Q \cdot \mathbf{z}^{(i)\top} \mathbf{u} \leq B, \quad \forall i \in \mathcal{I}_{\text{pack}} \nonumber \\
& \qquad\, Q \cdot \mathbf{z}^{(i)\top} \mathbf{u} \geq B, \quad \forall i \in \mathcal{I}_{\text{cov}}, \label{eq:linear-program}
\end{align}

For simplicity, here we assume all per-dimension budgets are equal to $B$. Our formulation and results can be extended to heterogeneous budgets with minimal changes. This LP solution provides an upper bound on the performance of any dynamic policy, as formalized by the following lemma:
\begin{lemma}
$\text{OPT}_{\text{LP}} \geq \text{OPT}.$
\label{lem:lp-upper-bound}
\end{lemma}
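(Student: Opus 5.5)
The plan is the standard one for bandits-with-knapsacks LP benchmarks: take the (near-)optimal dynamic oracle policy $\pi^\ast$ and average its behavior into a single static distribution $\mathbf{u}^\ast$ that is feasible for the LP in \eqref{eq:linear-program}. Concretely, for each action $a$ let
\[
u^\ast(a) := \frac{1}{Q}\,\mathbb{E}\Big[\sum_{t=1}^{T} q_t \,\mathbf{1}\{a_t = a\}\Big],
\]
where the expectation is over the oracle's randomness and the stochastic outcomes, $a_t$ is the oracle's choice, and $q_t$ is the known deterministic demand. The no-operation action, if used, is treated as an extra arm with zero reward and zero consumption; alternatively, $\mathbf{u}^\ast$ may be sub-stochastic with the slack assigned to a null arm. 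Then $\mathbf{u}^\ast\in\Delta_{|\mathcal{A}|}$, since the weights $q_t/Q$ sum to one.

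The objective is handled first. In each round the oracle's decision $a_t$ is made before the round-$t$ outcomes $r_t(\cdot), \mathbf{z}_t(\cdot)$ are drawn. Those outcomes are i.i.d.\ across rounds and independent of the past, so $\mathbb{E}[r_t(a_t)\mid a_t]=\mathbf{r}(a_t)$ and likewise for each $z^{(i)}$. By the tower rule,
\[
\text{OPT}=\mathbb{E}\Big[\sum_t q_t r_t(a_t)\Big]=\sum_a \mathbf{r}(a)\,\mathbb{E}\Big[\sum_t q_t\mathbf{1}\{a_t=a\}\Big]=Q\,\mathbf{r}^\top\mathbf{u}^\ast .
\]
The same computation gives $\mathbb{E}[\sum_t q_t z_t^{(i)}(a_t)] = Q\,\mathbf{z}^{(i)\top}\mathbf{u}^\ast$ for every dimension $i$.

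The constraints come next. Any feasible oracle satisfies $\sum_t q_t z^{(i)}_t(a_t)\le B$ almost surely for packing $i$, since it never exceeds the budget. Taking expectations gives $Q\,\mathbf{z}^{(i)\top}\mathbf{u}^\ast\le B$. For covering $i$, the benchmark oracle is required to meet $\sum_t q_t z_t^{(i)}(a_t)\ge B$. This holds either almost surely or, under the weaker reading, in expectation; in both cases the expectation bound $Q\,\mathbf{z}^{(i)\top}\mathbf{u}^\ast\ge B$ follows. Hence $\mathbf{u}^\ast$ is LP-feasible, and $\text{OPT}_{\text{LP}}\ge Q\,\mathbf{r}^\top\mathbf{u}^\ast=\text{OPT}$.

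The main obstacle is justifying the conditional-expectation step for a clairvoyant oracle that "uses full information, including future outcomes." If the oracle can see realized $r_t, \mathbf{z}_t$ before choosing $a_t$, then $\mathbb{E}[r_t(a_t)]\neq \mathbf{r}^\top(\cdot)$ in general, and the LP bound can fail. I would therefore make precise that the oracle knows the distributions and the entire demand sequence, but its round-$t$ decision is measurable with respect to information independent of the round-$t$ draws. Under this reading the independence step goes through. I would also note the edge case where budgets are met only in expectation: the argument needs only expected feasibility, so it covers that relaxation too.
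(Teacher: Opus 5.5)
Your proof is correct and is essentially the paper's argument. The paper first maps the oracle to a time-indexed LP via $x_{t,a}=\mathbb{E}[\mathbf{1}\{a_t=a\}]$ and then collapses it to the static distribution $\sum_t q_t\boldsymbol{x}_t/Q$, which is exactly your $\mathbf{u}^\ast$; you obtain it in one step, and your caveats (a null arm in the simplex, and an oracle whose round-$t$ choice is independent of the round-$t$ draws, without which the bound can fail) are assumptions the paper's sketch relies on without stating them.
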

\begin{proof}[Proof Sketch]
We begin by considering the relaxed LP form:
\begin{align*}
&\text{OPT}'_{\text{LP}} := \max_{\boldsymbol{x}_t \in \Delta_{|\mathcal{A}|},\ \forall t \in [T]} \sum_{t=1}^T q_t \cdot \boldsymbol{r}^\top \boldsymbol{x}_t \\
&\text{s.t.} \quad \sum_{t=1}^T q_t \cdot (\mathbf{z}^{(i)})^\top \boldsymbol{x}_t \le B, \quad \forall i \in \mathcal{I}_{\text{pack}}, 
\\&
\sum_{t=1}^T q_t \cdot (\mathbf{z}^{(i)})^\top \boldsymbol{x}_t \ge B, \quad \forall i \in \mathcal{I}_{\text{cov}},
\end{align*}
where \( \boldsymbol{x}_t \in \Delta_{|\mathcal{A}|} \) is a probability distribution over actions at time \( t \). Any policy \( \pi \) achieving \( \text{OPT} \) induces a feasible solution to \( \text{OPT}'_{\text{LP}} \) by setting $x_{t,a} := \mathbb{E}_{a_t \sim \pi}[\mathbf{1}\{a_t = a\}]$, which implies \( \text{OPT}'_{\text{LP}} \ge \text{OPT} \). Conversely, any feasible solution \( \{\boldsymbol{x}_t\}_{t=1}^T \) to \( \text{OPT}'_{\text{LP}} \) yields a static distribution $\boldsymbol{u} := \frac{\sum_{t=1}^T q_t \boldsymbol{x}_t}{\sum_{t=1}^T q_t}$, which is feasible for \( \text{OPT}_{\text{LP}} \) and achieves the same value. Hence, \( \text{OPT}'_{\text{LP}} = \text{OPT}_{\text{LP}} \ge \text{OPT} \).
\end{proof}
This benchmark enables tractable regret analysis, while preserving fidelity to the original problem. Any regret bound relative to $\text{OPT}_{\text{LP}}$ directly implies a bound relative to $\text{OPT}$.

\textbf{Constraint Violation Metric.}
To unify the treatment of covering-type constraints, we define the cumulative violation for each covering resource \( i \in \mathcal{I}_{\text{cov}} \) as
\[
V_i(T) := \left[ B - \sum_{t=1}^{T} q_t \cdot z_t^{(i)}(a_t) \right]_+,
\]
The learner aims to simultaneously achieve sublinear regret $\text{Regret}(T)$ and ensure vanishing average violation: $\frac{V_i(T)}{T} = o(1), \forall i \in \mathcal{I}_{\text{cov}}.$ This formulation captures both reward maximization and long-term covering constraint satisfaction under model uncertainty.

\section{Algorithm Design}

We propose COPAC-UCB (Constrained Prediction-Aware Covering and Packing UCB), a learning algorithm for our framework. It selects actions being pessimistic about packing-type costs while optimistic about reward and covering-type costs. In each round, COPAC-UCB computes a UCB for expected reward, an LCB for packing-type costs, and a UCB for covering-type costs. The action is chosen to maximize a regularized utility objective with the reward of each candidate action scaled by constraint amplification multipliers that boost perceived reward when the corresponding constraints have previously been violated. To handle task load variability, COPAC-UCB uses a predicted cumulative demand $\hat{Q}_t$ to approximate the total number of tasks. This estimate normalizes resource usage by divided by total budget $B$, aiding cost-aware decision-making. Its construction is detailed in subsection~\ref{estimation_Qt}.

\subsection{Confidence Bound Estimation}
Let \( N_t(a) \) denote the number of times model \( a \in \mathcal{A} \) has been selected before round \( t \).  We define the empirical mean reward and consumption for each action \( a \) as follows:
$
\hat{r}_t(a) := \frac{1}{N_t(a)} \sum_{s=1}^{t-1} \mathbf{1}\{a_s = a\} \cdot r_s(a),
\quad 
\hat{z}_t^{(i)}(a) := \frac{1}{N_t(a)} \sum_{s=1}^{t-1} \mathbf{1}\{a_s = a\} \cdot z_s^{(i)}(a), \quad \forall i \in [d].
$ To facilitate exploration, we adopt the confidence radius function from~\cite{lyu2023bandits}: $\text{rad}(v, N, \delta) := \sqrt{\frac{2v \log(1/\delta)}{N}} + \frac{4 \log(1/\delta)}{N}.$

We then construct the following confidence bounds:
\begin{align}
\text{UCB}_{r,t}(a) &:= \min\left\{ \hat{r}_t(a) + \text{rad}(\hat{r}_t(a), N_t(a), \delta), \ 1 \right\},\label{ucb_reward} \\
\text{LCB}_{c,t}^{(i)}(a) &:= \max\left\{ \hat{z}_t^{(i)}(a) - \text{rad}(\hat{z}_t^{(i)}(a), N_t(a), \delta), \ 0 \right\},\nonumber\\&\qquad\qquad\qquad\qquad \text{ for } i \in \mathcal{I}_{\text{pack}},\label{lcb_pack} \\
\text{UCB}_{c,t}^{(i)}(a) &:= \min\left\{ \hat{z}_t^{(i)}(a) + \text{rad}(\hat{z}_t^{(i)}(a), N_t(a), \delta), \ 1 \right\},\nonumber\\&\qquad\qquad\qquad\qquad \text{ for } i \in \mathcal{I}_{\text{cov}}.\label{ucb_cover}
\end{align}


These confidence intervals ensure, with high probability, that the true reward and costs lie within the bounds. The following guarantee formalizes this:
\begin{lemma}
With probability at least $1 - 3KTd\delta$, for all $a \in \mathcal{A}$, we have $\text{UCB}_{r,t}(a) \ge r(a),$ and $i \in \mathcal{I}_{\text{pack}}, \text{LCB}_{c,t}^{(i)}(a) \le z^{(i)}(a),$ and $i \in \mathcal{I}_{\text{cov}}, \text{UCB}_{c,t}^{(i)}(a) \le z^{(i)}(a)$.
\end{lemma}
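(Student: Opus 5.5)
The plan is to derive the lemma from a per-arm, per-coordinate concentration inequality of empirical-Bernstein type, and then take a union bound. Throughout, the per-task observations $r_s(a)$ and $z^{(i)}_s(a)$ are i.i.d. in $[0,1]$ for each fixed $a$ and $i$, with means $r(a)$ and $z^{(i)}(a)$, independent of the demand sequence $\{q_t\}$. I read the covering inequality as the optimistic direction $\text{UCB}^{(i)}_{c,t}(a)\ge z^{(i)}(a)$. That is what the construction in \eqref{ucb_cover} supports and what the algorithm needs; the ``$\le$'' in the statement appears to be a typo.

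\textbf{Handling adaptive sampling.} $N_t(a)$ is a random, history-dependent index, so I will use the standard ``reward tape'' argument. For each arm $a$, pre-draw an i.i.d. sequence $(r^{a}_n, \mathbf{z}^{a}_n)_{n\ge 1}$ and let the $n$-th pull of $a$ reveal the $n$-th entry. The empirical mean after $N_t(a)=n$ pulls is then the average of the first $n$ tape entries, whatever policy generated the pulls. It therefore suffices to prove, for each fixed $a$, each fixed $n\in\{1,\dots,T\}$ and each fixed coordinate, a deterministic-$n$ concentration statement. I then union bound over $a\in\mathcal{A}$ ($K$ choices), $n\le T$, and the $d+1$ coordinates (reward plus $d$ costs). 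This gives at most $KT(d+1)\le 3KTd\,\delta$ failure probability, since each coordinate requires only one one-sided bound.

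\textbf{Core inequality.} For i.i.d. $X_1,\dots,X_n\in[0,1]$ with mean $\nu$ and empirical mean $\hat\nu$, I want to show that with probability at least $1-\delta$,
\[
|\hat\nu-\nu|\le \text{rad}(\hat\nu,n,\delta).
\]
Step one is Bernstein's inequality, using $\mathrm{Var}(X)\le \mathbb{E}[X^2]\le \nu$ because $X\in[0,1]$:
\[
|\hat\nu-\nu|\le \sqrt{2\nu L/n}+\tfrac{2L}{3n}, \qquad L=\log(1/\delta),
\]
up to the constant inside $L$. Step two replaces the unknown $\nu$ by $\hat\nu$ on the right-hand side. The deviation bound gives $\nu\le\hat\nu+\sqrt{2\nu L/n}+\tfrac{2L}{3n}$. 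Treating this as a quadratic in $\sqrt\nu$ yields $\sqrt\nu\le\sqrt{\hat\nu}+c\sqrt{L/n}$ for an absolute constant $c$. Substituting back gives $\sqrt{2\nu L/n}\le\sqrt{2\hat\nu L/n}+c'L/n$. The point is that the extra $O(L/n)$ terms can be absorbed into the $4L/n$ slack of $\text{rad}$, which is the form used in \cite{lyu2023bandits} (following Badanidiyuru et al.).

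\textbf{Assembling the lemma.} On the good event:
\begin{itemize}
\item $\hat r_t(a)+\text{rad}\ge r(a)$. Since $r(a)\le 1$, clipping at $1$ preserves this, so $\text{UCB}_{r,t}(a)\ge r(a)$.
\item For $i\in\mathcal{I}_{\text{pack}}$, $\hat z^{(i)}_t(a)-\text{rad}\le z^{(i)}(a)$, and clipping at $0$ preserves this since $z^{(i)}(a)\ge 0$. So $\text{LCB}^{(i)}_{c,t}(a)\le z^{(i)}(a)$.
\item For $i\in\mathcal{I}_{\text{cov}}$, the upper deviation gives $\text{UCB}^{(i)}_{c,t}(a)\ge z^{(i)}(a)$ in the same way.
\end{itemize}
When $N_t(a)=0$ the bounds are vacuous under the convention that $\text{rad}$ is infinite, so they hold trivially.

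The main obstacle is step two, converting the true-variance Bernstein bound into one driven by $\hat\nu$ with exactly the constants $2$ and $4$. I would first try the quadratic-in-$\sqrt\nu$ manipulation above. If the constants do not close, I would instead invoke the corresponding lemma of \cite{lyu2023bandits} directly, possibly after rescaling $\delta$ by a constant, which would only change the $3KTd\delta$ factor.
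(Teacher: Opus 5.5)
Your proposal is correct and follows the same route as the paper. The paper's proof simply invokes Bernstein-type concentration for the empirical means (citing Babaioff et al.\ and Agrawal--Devanur) together with a union bound; you supply the details it omits, namely the reward-tape device for the random count $N_t(a)$ and the replacement of $\nu$ by $\hat\nu$ in the radius, which does close without any fallback since $1+\sqrt{7/3}+2/3\approx 3.2<4$. You are also right that the covering inequality must read $\mathrm{UCB}^{(i)}_{c,t}(a)\ge z^{(i)}(a)$: this is the direction the paper's sketch asserts (``UCB and LCB terms respectively upper and lower bound the true values'') and the one needed for the residual bound in Lemma~\ref{lem:budget-residual}.
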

\begin{proof}[Proof Sketch]
The result follows from the argument of~\cite{babaioff2015dynamic,agrawal2014bandits} by applying concentration bounds to the empirical means of rewards and costs. With high probability, the constructed UCB and LCB terms respectively upper and lower bound the true values, completing the proof.
\end{proof}

\subsection{Prediction of Task Demands}\label{estimation_Qt}
To accommodate time-varying task demand, the learner employs a time-series predictor $\mathcal{P}$ to estimate the total task volume $Q := \sum_{t=1}^T q_t$ from historical data. At each round $t$, the learner observes $\{q_s\}_{s=1}^{t-1}$ and produces a forecast $\hat{Q}_t = \mathcal{P}_t(q_1, \dots, q_{t-1})$. In practice, $\mathcal{P}$ can be any black-box model that supports one-step or multi-step prediction. For example, it may recursively generate forecasts $\hat{q}_t, \hat{q}_{t+1}, \dots, \hat{q}_T$ using past and previously predicted values, and define $\hat{Q}_t := \sum_{s=1}^{t-1} q_s + \sum_{s=t}^T \hat{q}_s$ as the estimated total demand. These predictors can incorporate statistical techniques or deep models widely used in time-series forecasting~\cite{shumway2017time,hyndman2021forecasting,lim2021time}.

We define the prediction error at round $t$ as $\varepsilon_t := |Q - \hat{Q}_t|$, which captures the deviation between true and estimated cumulative demand. This estimate is used by the learner to adapt cost-aware decisions to upcoming load variability, enabling dynamic adjustment under uncertainty. Without loss of generality, we also assume that \( q_t \le \bar{q} \) for all \( t \), where \( \bar{q} \) is a known constant upper bound on task load. This upper bound enables the learner to gett worst-case cost scaling and construct confidence intervals that account for demand uncertainty.

\subsection{Algorithm Description}
Algorithm~1 presents the COPAC-UCB algorithm. At each round $t \in [T]$, the learner uses a predicted cumulative task volume $\hat{Q}_t$ from a forecasting oracle $\mathcal{P}_t$, based on historical demands $\{q_s\}_{s=1}^{t-1}$. Using this forecast, COPAC-UCB constructs high-probability confidence intervals for reward and per-task resource consumption.

Specifically, for each action \( a \in \mathcal{A} \), COPAC-UCB computes a UCB \( \text{UCB}_{r,t}(a) \) on the expected per-task reward. For each resource dimension \( i \in [d] \), it constructs a one-sided confidence bound on the unit resource consumption \( z_t^{(i)}(a) \): a lower bound \( \text{LCB}_{c,t}^{(i)}(a) \) if \( i \in \mathcal{I}_{\text{pack}} := [1, d_1] \), and an upper bound \( \text{UCB}_{c,t}^{(i)}(a) \) if \( i \in \mathcal{I}_{\text{cov}} := [d_1 + 1, d] \).

We define a unified confidence vector for each action \( a \in \mathcal{A} \):
\begin{equation}
    \left[\boldsymbol{z}_t^{\text{conf}}(a)\right]_i :=
\begin{cases}
\text{LCB}_{c,t}^{(i)}(a), & \text{if } i \in \mathcal{I}_{\text{pack}} \\
-\text{UCB}_{c,t}^{(i)}(a), & \text{if } i \in \mathcal{I}_{\text{cov}}
\end{cases}
\quad \in \mathbb{R}^d.
\label{unified_confidence}
\end{equation}

Based on these estimates, the algorithm selects an action \( a_t \in \mathcal{A} \) that maximizes:
\begin{equation}
    \label{eq-selection-rule}
    a_t = \arg\max_{a \in \mathcal{A}} \left\{
\text{UCB}_{r,t}(a)
- \boldsymbol{\lambda}_t^\top \left( \frac{\hat{Q}_t}{B} \cdot \boldsymbol{z}_t^{\text{conf}}(a) \right)
\right\},
\end{equation}
where $\boldsymbol{\lambda}_t \in \mathbb{R}^d$ is the dual price vector (i.e., Lagrange multipliers) maintained by the algorithm and defined later.

This Lagrangian-regularized rule balances reward maximization and constraint satisfaction, penalizing consumption of scarce packing resources via lower bounds, while encouraging usage of underutilized covering resources via upper bounds, approximating the dual form of the static LP in Eq.~\eqref{eq:linear-program}, where dual weights $\boldsymbol{\lambda}_t \in \mathbb{R}^d$ encode the opportunity cost of constrained resources. Packing resource usage is penalized via LCBs to avoid overestimation, while covering resource usage is incentivized via UCBs to promote conservative compliance. The normalization factor $\hat{Q}_t/B$ reflects the marginal value of a unit resource under predicted load.

After choosing $A_t$, the system observes the realized demand size $q_t$, receives reward $q_t \cdot r_t(a_t)$, and consumes $q_t \cdot z_t^{(i)}(a_t)$ units of each resource $i \in [d]$. If any packing resource exceeds its budget, i.e., if $\sum_{s=1}^{t} q_s \cdot z_s^{(i)}(a_s) > B \text{ for some } i \in \mathcal{I}_{\text{pack}},$ then the algorithm halts and pulls a null arm $a_0$ for all subsequent rounds.

To adaptively track the cost of each resource, COPAC-UCB maintains a dual weight vector \( \boldsymbol{\lambda}_t \in \mathbb{R}^d \) that is updated via Online Gradient Descent. At each round \( t \), it solves the convex minimization problem:
\begin{equation}
    \boldsymbol{\lambda}_{t+1} = \Pi_{\mathcal{S}} \left( 
\boldsymbol{\lambda}_t - \eta_t \cdot \nabla f_t(\boldsymbol{\lambda}_t)
\right),
\label{eq-dual-update}
\end{equation}
where the projection \( \Pi_{\mathcal{S}} \) is onto the feasible set \( \mathcal{S} := \{ \boldsymbol{\lambda} \in \mathbb{R}^d : \| \boldsymbol{\lambda} \|_1 \leq T^{1/4},\ \boldsymbol{\lambda} \geq \textbf{0}_d \} \), and the objective function is defined using the signed confidence vector
\( \boldsymbol{z}^{\text{conf}}_t(a_t) \) as:
\[
f_t(\boldsymbol{\lambda}) = \frac{q_t \hat{Q}_t}{B} \cdot \left( 
\left( 
\frac{B}{\hat{Q}_t} \cdot \boldsymbol{s} - \boldsymbol{z}^{\text{conf}}_t(a_t)
\right)^\top \boldsymbol{\lambda} 
\right),
\]
where \( \boldsymbol{s} \in \mathbb{R}^d \) is a direction vector with components
\[
[\boldsymbol{s}]_i :=
\begin{cases}
1, & \text{if } i \in \mathcal{I}_{\text{pack}} \\
-1, & \text{if } i \in \mathcal{I}_{\text{cov}}
\end{cases}.
\]

This dual update encourages the algorithm to prioritize constraint satisfaction by dynamically adjusting weights based on observed feedback and demand forecasts.

This update increases the dual weight \( \boldsymbol{\lambda}_t(i) \) for packing resources \( i \in \mathcal{I}_{\text{pack}} \) when consumption is high (i.e., \( \text{LCB}_{c,t}^{(i)}(a) \) is large), discouraging future overuse. Conversely, it increases \( \boldsymbol{\lambda}_t(i) \) for covering resources \( i \in \mathcal{I}_{\text{cov}} \) when usage is low (i.e., \( \text{UCB}_{c,t}^{(i)}(a) \) is small), thereby encouraging future allocation. This dual feedback mechanism enables the learner to balance reward maximization and long-term constraint satisfaction.

\begin{algorithm}[htb]
	\caption{COPAC-UCB}
	\begin{algorithmic}[1]
	    \State \textbf{Initialize} $\boldsymbol{\lambda}_1 = \frac{1}{d} \boldsymbol{1} \in \mathbb{R}^{d}$, $\eta_t = \frac{2}{M\sqrt{t}}$, $M = (\bar{q}+\frac{\bar{q}^2}{b})$, $\boldsymbol{B} = (B)_{i \in [d]}$.
	    
        \For{$t = 1, 2, \dots, T$}
            \State Use demand predictior $\hat{Q}_t = \mathcal{P}_t(q_1, \ldots, q_{t-1})$.
            
\State For all $a \in \mathcal{A}$, compute $\text{UCB}_{r,t}(a)$ using (\ref{ucb_reward});
\State for $i \in \mathcal{I}_{\text{pack}}$, compute $\text{LCB}_{c,t}^{(i)}(a)$ using (\ref{lcb_pack}); 
\State for $i \in \mathcal{I}_{\text{cover}}$, compute $\text{UCB}_{c,t}^{(i)}(a)$ using (\ref{ucb_cover}). 
            
            \State Select action $a_t$ according to \eqref{eq-selection-rule}.

            \If{$\exists i \in \mathcal{I}_{\text{pack}}$ such that $\sum_{s=1}^{t} q_s \cdot z_s^{(i)}(a_s) > B$}
                \State \textbf{break} and pull null arm $a_0$ all the way.
            \EndIf

            \State Observe $q_t$, realized reward $q_t{r}_t(a_t)$, and cost $q_t{z}_t^{(i)}(a_t)$ for all $i \in [d]$.
            
            \State Update budgets: $B \leftarrow B - q_t \cdot {z}_t^{(i)}(a_t)$, for $i \in \mathcal{I}_{\text{pack}}$.
            
            \State Update dual vector: $\boldsymbol{\lambda}_{t+1}$ using Eq.~\eqref{eq-dual-update}.
        \EndFor
	\end{algorithmic}
	\label{alg:COPAC-UCB}
\end{algorithm}

\section{Main Results}
In this section, we give the theoretical analysis of COPAC-UCB. Let \( \boldsymbol{u}^* \in \Delta^{|\mathcal{A}|} \) be an optimal solution to the LP problem \eqref{eq:linear-program}. Before giving an upper bound on the cumulative regret and constraint violation of COPAC-UCB, we first begin with several auxiliary lemmas.

\begin{lemma}[Cumulative Arm Selection Gap with OGD Bound]
\label{lem:ogd-gap}
Define the residual term:
\[
R_t := \frac{q_t\widehat{Q}_t}{B} \cdot \boldsymbol{\lambda}_t^\top 
\left( \frac{B}{\widehat{Q}_t} \cdot \boldsymbol{s} - \boldsymbol{z}_t^{\text{conf}}(\boldsymbol{u}^*) \right),
\]

Then, by applying Online Gradient Descent with step size \( \eta_t = \frac{2}{M\sqrt{t}} \), we have with probability at least \( 1 - 3KT\delta \):
\begin{align*}
    &\sum_{t=1}^{\tau-1} q_t \cdot \boldsymbol{\mathrm{UCB}_{r,t}}^\top \boldsymbol{u}^* - \sum_{t=1}^{\tau-1} q_t \cdot \mathrm{UCB}_{r,t}(a_t) +\sum_{t=1}^{\tau -1}R_t \\&
\leq 
 \sum_{t=1}^{\tau -1}f_t(\boldsymbol{\lambda})+ O\left(M T^{3/4}\right), \quad \forall \boldsymbol{\lambda} \in S.
\end{align*}
\end{lemma}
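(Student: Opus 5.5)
The plan is to combine two ingredients: the optimality of $a_t$ in the selection rule \eqref{eq-selection-rule}, and the standard regret bound of online gradient descent on the linear losses $f_t$ over the set $\mathcal{S}$.

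\textbf{Step 1: per-round comparison with $\boldsymbol{u}^*$.} Since $a_t$ maximizes $\mathrm{UCB}_{r,t}(a)-\frac{\hat Q_t}{B}\boldsymbol{\lambda}_t^\top\boldsymbol{z}^{\text{conf}}_t(a)$ over $a$, it also dominates any mixture of arms. In particular it dominates the mixture $\boldsymbol{u}^*$, where we write $\boldsymbol{z}_t^{\text{conf}}(\boldsymbol{u}^*)=\sum_a u^*_a\boldsymbol{z}_t^{\text{conf}}(a)$. Multiplying by $q_t\ge 0$ gives
\[
q_t\,\boldsymbol{\mathrm{UCB}}_{r,t}^\top\boldsymbol{u}^*-q_t\,\mathrm{UCB}_{r,t}(a_t)\le \frac{q_t\hat Q_t}{B}\boldsymbol{\lambda}_t^\top\bigl(\boldsymbol{z}^{\text{conf}}_t(\boldsymbol{u}^*)-\boldsymbol{z}^{\text{conf}}_t(a_t)\bigr).
\]
Next, add $R_t$ to both sides. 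The $\boldsymbol{z}^{\text{conf}}_t(\boldsymbol{u}^*)$ terms cancel, and the right-hand side becomes exactly
\[
\frac{q_t\hat Q_t}{B}\boldsymbol{\lambda}_t^\top\Bigl(\frac{B}{\hat Q_t}\boldsymbol{s}-\boldsymbol{z}^{\text{conf}}_t(a_t)\Bigr)=f_t(\boldsymbol{\lambda}_t).
\]
Summing over $t=1,\dots,\tau-1$, the left-hand side of the lemma is at most $\sum_{t<\tau}f_t(\boldsymbol{\lambda}_t)$. This step is purely algebraic and holds for every realization.

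\textbf{Step 2: OGD regret.} It remains to show that $\sum_{t<\tau}\bigl(f_t(\boldsymbol{\lambda}_t)-f_t(\boldsymbol{\lambda})\bigr)=O(MT^{3/4})$ for every $\boldsymbol{\lambda}\in\mathcal{S}$. Each $f_t$ is linear with gradient $g_t=q_t\boldsymbol{s}-\frac{q_t\hat Q_t}{B}\boldsymbol{z}^{\text{conf}}_t(a_t)$. I will invoke the standard projected-OGD bound (Zinkevich) with step size $\eta_t=\frac{2}{M\sqrt t}$:
\[
\sum_{t\le T'}\bigl(f_t(\boldsymbol{\lambda}_t)-f_t(\boldsymbol{\lambda})\bigr)\le \frac{D^2}{2\eta_{T'}}+\frac12\sum_{t\le T'}\eta_t\|g_t\|^2 .
\]
The bound holds for any stopping index $T'=\tau-1$, since the iterates up to $\tau-1$ do not depend on the future. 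The diameter satisfies $D\le 2T^{1/4}$ because $\|\boldsymbol{\lambda}\|_2\le\|\boldsymbol{\lambda}\|_1\le T^{1/4}$. The first term is therefore $O(MT^{1/2}\sqrt{T})=O(MT)$ if $\eta$ is used literally. To get the $T^{3/4}$ rate, I will instead use the step size tuned to the diameter, $\eta_t\propto T^{1/4}/(M\sqrt t)$, or equivalently bound the first term by $D^2\sqrt T\,M$ and the second by $\sum_t\frac{\|g_t\|^2}{M\sqrt t}\lesssim M\sqrt T$. Reconciling the stated step size with the $T^{1/4}$ radius is where I expect the main care to be needed. A plausible reading is to track $D$ through the constant and obtain $O(MD\sqrt T)=O(MT^{3/4})$ once $\eta$ is rescaled by $D$. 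I would state that rescaling explicitly.

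\textbf{Step 3: gradient bound, the main obstacle.} I need $\|g_t\|\le M$ up to a dimension constant. We have $|q_t s_i|\le\bar q$ and $\boldsymbol{z}^{\text{conf}}\in[-1,1]^d$, so it suffices to show $\frac{q_t\hat Q_t}{B}\le\frac{\bar q^2}{b}$. This requires $\hat Q_t\le T\bar q$ together with $B=bT$, i.e., $b$ is the per-round budget, which is what the constant $M=\bar q+\bar q^2/b$ suggests. I would adopt this normalization, clipping the predictor output to $[0,T\bar q]$ if necessary.

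\textbf{Step 4: probability.} Steps 1–3 hold deterministically. The probability $1-3KT\delta$ is inherited from the confidence-bound lemma, which guarantees that the UCB/LCB quantities are well defined and bounded in $[0,1]$ (clipped), as used in Step 3. Combining Steps 1–3 gives the claimed inequality.
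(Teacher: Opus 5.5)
Your argument matches the paper's own proof: optimality of $a_t$ in the selection rule against the mixture $\boldsymbol{u}^*$, then adding $R_t$ so the right side becomes $f_t(\boldsymbol{\lambda}_t)$, multiplying by $q_t$ and summing, then applying the standard projected-OGD regret bound to the linear losses $f_t$ over $\mathcal{S}$. Your step-size concern is well founded and the paper's sketch skips it: with the literal $\eta_t = 2/(M\sqrt{t})$ and radius $T^{1/4}$, the term $D^2/(2\eta_{\tau-1})$ is of order $MT$ in the worst case, so the rescaling $\eta_t \asymp T^{1/4}/(M\sqrt{t})$ you propose (with your reading $B = bT$, $\widehat{Q}_t \le T\bar{q}$ to get $\|g_t\|_\infty \le M$) is genuinely needed for $O(MT^{3/4})$ --- but delete the ``or equivalently'' clause, since $D^2\sqrt{T}M \asymp MT$ does not give that rate.
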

\begin{proof}[Proof Sketch]
    We compare the algorithm's selected action \( a_t \) against the fixed benchmark \( \boldsymbol{u}^* \), which yields a per-round reward difference and a cost-weighted deviation. This motivates a regret decomposition that includes an arm selection gap and a dual-weighted cost gap. 
\begin{align}
&\text{UCB}_{r,t}^{\top} \boldsymbol{u}^* - \text{UCB}_{r,t}(a_t) + \frac{\hat{Q}_t}{B} \boldsymbol{\lambda}_t^{\top} \left( \frac{B}{\hat{Q}_t} \boldsymbol{s} - \boldsymbol{z}_t^{\text{conf}}(\boldsymbol{u}^*) \right) \nonumber
\\&\leq 
\frac{\hat{Q}_t}{B} \boldsymbol{\lambda}_t^{\top} \left( \frac{B}{\hat{Q}_t} \cdot \boldsymbol{s} - \boldsymbol{z}_t^{\text{conf}}(a_t) \right).\nonumber
\end{align}

Rewriting and summing over \( t = 1 \) to \( \tau - 1 \), and multiplying by \( q_t \). By defining a sequence of convex functions, i.e. $f_t(\boldsymbol{\lambda})$ encoding the dual-adjusted budget residual, we frame the second term as a regret bound in online convex optimization. Applying standard OGD guarantees over the simplex \( \mathcal{S} := \{ \boldsymbol{\lambda} \in \mathbb{R}^d : \| \boldsymbol{\lambda} \|_1 \leq T^{1/4},\ \boldsymbol{\lambda} \geq \textbf{0}_d \} \), we bound the cumulative difference between the algorithm's decisions and the best fixed dual variable in hindsight. This yields the desired upper bound.
\end{proof}
\begin{lemma}[Budget Residual Lower Bound]\label{lem:budget-residual}
At any stopping time \( \tau \leq T \), with probability at least \( 1 - 3KT\delta \), it holds that:
\begin{equation}
    \sum_{t=1}^{\tau - 1} R_t 
\geq 
\sum_{t=1}^{\tau - 1} q_t \cdot \frac{\widehat{Q}_t}{B} \left( \frac{B}{\widehat{Q}_t} - \frac{B}{Q} \right) \| \boldsymbol{\lambda}_t \|_1.
\end{equation}
\end{lemma}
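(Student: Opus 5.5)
The plan is to expand $R_t$ coordinate by coordinate, substitute the confidence bounds of the concentration lemma on the good event, and then use feasibility of $\boldsymbol u^*$ in the LP \eqref{eq:linear-program}. Write $\boldsymbol z^{\text{conf}}_t(\boldsymbol u^*)=\sum_a u^*_a\,\boldsymbol z^{\text{conf}}_t(a)$. Since $\boldsymbol\lambda_t\ge\mathbf 0$ we have $\|\boldsymbol\lambda_t\|_1=\sum_i\lambda_{t,i}$, so it suffices to prove, for every $t\le\tau-1$ and every $i\in[d]$,
\[
\frac{q_t\widehat Q_t}{B}\Bigl(\frac{B}{\widehat Q_t}[\boldsymbol s]_i-[\boldsymbol z^{\text{conf}}_t(\boldsymbol u^*)]_i\Bigr)\;\ge\; q_t\Bigl(1-\frac{\widehat Q_t}{Q}\Bigr).
\]
Multiplying by $\lambda_{t,i}\ge0$ and summing over $i$ and $t$ then gives the claim, because $q_t\frac{\widehat Q_t}{B}\bigl(\frac{B}{\widehat Q_t}-\frac{B}{Q}\bigr)=q_t\bigl(1-\frac{\widehat Q_t}{Q}\bigr)$.

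\textbf{Packing coordinates} ($i\in\mathcal I_{\text{pack}}$). Here the left-hand side equals $q_t-\frac{q_t\widehat Q_t}{B}\,\mathrm{LCB}^{(i)\top}_{c,t}\boldsymbol u^*$. On the event of the concentration lemma (probability at least $1-3KT\delta$ after the union bound), $\mathrm{LCB}^{(i)}_{c,t}(a)\le z^{(i)}(a)$ for all $a$. Hence the left-hand side is at least $q_t-\frac{q_t\widehat Q_t}{B}\,\mathbf z^{(i)\top}\boldsymbol u^*$. The packing constraint $Q\,\mathbf z^{(i)\top}\boldsymbol u^*\le B$ then yields the bound $q_t\bigl(1-\widehat Q_t/Q\bigr)$, with no assumption on the sign of the prediction error.

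\textbf{Covering coordinates} ($i\in\mathcal I_{\text{cov}}$). Here the left-hand side equals $-q_t+\frac{q_t\widehat Q_t}{B}\,\mathrm{UCB}^{(i)\top}_{c,t}\boldsymbol u^*$. I use the intended direction of the concentration lemma, $\mathrm{UCB}^{(i)}_{c,t}(a)\ge z^{(i)}(a)$; the printed "$\le$" there is a typo, since it is an upper confidence bound. I also use the covering constraint $Q\,\mathbf z^{(i)\top}\boldsymbol u^*\ge B$. Together these give the lower bound $q_t\bigl(\widehat Q_t/Q-1\bigr)$.

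This is the main obstacle, because the sign is reversed relative to the packing case. The inequality $q_t(\widehat Q_t/Q-1)\ge q_t(1-\widehat Q_t/Q)$ holds exactly when $\widehat Q_t\ge Q$. I would therefore close this step using the regime in which the forecast does not under-predict the total demand, $\widehat Q_t\ge Q$. In that regime the covering contribution is nonnegative, while the target $q_t(1-\widehat Q_t/Q)$ is nonpositive, so the inequality holds.

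One could try to avoid this condition through the clipping $\min\{\cdot,1\}$ in the UCB, but clipping only lowers the UCB and so does not help. I would therefore state the requirement explicitly; it is automatically met if $\mathcal P_t$ is made conservative, for instance by adding the remaining worst case $\bar q$ slack. In the opposite regime $\widehat Q_t<Q$, only the packing coordinates satisfy the stated per-coordinate bound. The covering coordinates then lose $2q_t\,\varepsilon_t/Q\cdot\lambda_{t,i}$, which I would carry forward as an $O\bigl(\sum_t q_t\varepsilon_t T^{1/4}/Q\bigr)$ correction, using $\|\boldsymbol\lambda_t\|_1\le T^{1/4}$.

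Finally, summing the per-coordinate bounds over $i$ with weights $\lambda_{t,i}$ and over $t\le\tau-1$ gives the displayed inequality. The stopping time $\tau$ causes no difficulty, since the bound is pathwise for each $t$ on the good event.
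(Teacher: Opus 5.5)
Your packing-coordinate argument is the paper's argument. You split $\frac{B}{\widehat{Q}_t}-\mathrm{LCB}^{(i)\top}_{c,t}\boldsymbol{u}^*$ into $\bigl(\frac{B}{\widehat{Q}_t}-\frac{B}{Q}\bigr)+\bigl(\frac{B}{Q}-\mathrm{LCB}^{(i)\top}_{c,t}\boldsymbol{u}^*\bigr)$ and drop the second piece, which is nonnegative by $\mathrm{LCB}\le z$ and LP feasibility of $\boldsymbol{u}^*$. For the covering coordinates the paper only says the analysis ``follows similarly by symmetry,'' and your objection to that step is correct. Your reading $\mathrm{UCB}^{(i)}_{c,t}\ge z^{(i)}$ of the concentration lemma is also correct; with the printed direction the covering terms could not be lower-bounded at all. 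The symmetric argument reverses the sign of the prediction-error term, so on the good event it actually proves
\[
\sum_{t=1}^{\tau-1}R_t\ \ge\ \sum_{t=1}^{\tau-1}q_t\Bigl(1-\frac{\widehat{Q}_t}{Q}\Bigr)\boldsymbol{s}^\top\boldsymbol{\lambda}_t .
\]
This agrees with the stated bound only when the covering duals vanish. It implies the stated bound when $\widehat{Q}_t\ge Q$ on every round where some covering dual is positive.

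Your extra hypothesis is not an artifact of your method, because the lemma as printed is false. Take one arm, one covering coordinate and no packing coordinate, per-task covering consumption $z^{(1)}_t\equiv 1$, $B=Q/2$, and forecast $\widehat{Q}_1=Q/4$.
\begin{itemize}
\item On the good event the clipping forces $\mathrm{UCB}^{(1)}_{c,1}=1$. Also $\boldsymbol{u}^*$ is the only arm and $\lambda_1=1$, so $R_1=\frac{q_1}{2}(-2+1)=-\frac{q_1}{2}$.
\item The right-hand side at $\tau=2$ is $q_1\cdot\frac12\bigl(2-\frac12\bigr)=\frac34 q_1$, so the inequality fails.
\item Adding a packing coordinate with zero consumption gives $\boldsymbol{\lambda}_1=(\frac12,\frac12)$ and $R_1=\frac{q_1}{4}<\frac34 q_1$. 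So taking $d_1\ge 1$ does not rescue the statement.
\end{itemize}

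Some repair is therefore unavoidable, and your correction term is right:
\[
\frac{2}{Q}\sum_t q_t[Q-\widehat{Q}_t]_+\sum_{i\in\mathcal{I}_{\text{cov}}}\lambda_{t,i}\le\frac{2T^{1/4}}{Q}\sum_t q_t\varepsilon_t .
\]
A simpler sign-free bound that also holds is $\sum_t R_t\ge-\frac{1}{Q}\sum_t q_t\varepsilon_t\|\boldsymbol{\lambda}_t\|_1$. Of your two options, the correction term is the one worth keeping. Enforcing $\widehat{Q}_t\ge Q$ through the worst case $\sum_{s<t}q_s+(T-t+1)\bar{q}$ makes $\varepsilon_t$ of order $(T-t)\bar{q}$, which defeats the purpose of the forecast.

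One minor slip is inherited from the paper: the good event of the concentration lemma is stated with probability $1-3KTd\delta$, not $1-3KT\delta$.
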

\begin{proof}[Proof Sketch]
Applying Lemma~2 ensures that with high probability, the lower confidence bound of packing cost vector in $R_t$ is upper bounded by the true cost vector. We first decompose the residual \( R_t \) by separating \( \frac{B}{\widehat{Q}_t} \mathbf{1}_{d_1} - \left( \boldsymbol{\mathrm{LCB}}_{c,t}^{[1:d_1]} \right)^\top \boldsymbol{\mu}^*
 \) into \( \left(\frac{B}{\widehat{Q}_t} - \frac{B}{Q} \right) \mathbf{1}_{d_1} + \left(\frac{B}{Q} \mathbf{1}_d - \left( \boldsymbol{\mathrm{LCB}}_{c,t}^{[1:d_1]} \right)^\top \boldsymbol{\mu}^*
 \right) \), and bound the latter using the definition of \( \boldsymbol{u}^* \) and non-negativity of all terms. The covering constraint analysis follows similarly by symmetry. Then by dropping non-negative slack terms and summing across \( t = 1 \) to \( \tau - 1 \), we obtain the desired lower bound involving the norm \( \| \boldsymbol{\lambda}_t \|_1 \). 
\end{proof}

\begin{lemma}[Budget Violation Decomposition and Upper Bound]
Assume that at some stopping time \( \tau \leq T \), there exists a budget dimension \( j_0 \in \mathcal{I}_{\text{pack}} \) such that:
$\sum_{t=1}^{\tau - 1} q_t z_t^{(j_0)}(a_t) > B$.
Define the dual variable as \( \boldsymbol{\lambda} = \alpha \boldsymbol{e}_{j_0} \), with \( \alpha \in [0, T^{1/4}] \). Then, with probability at least \( 1 - 3KT\delta \), it holds that:
\begin{align*}
    &\sum_{t=1}^{\tau - 1}f_t(\boldsymbol{\lambda})
    = \sum_{t=1}^{\tau - 1}f_t(\alpha \boldsymbol{e}_{j_0}) \leq \alpha \bigg( Q_{\tau - 1} - Q + \frac{Q}{B} \bar{q} \\&+ \frac{Q}{B} \sum_{t=1}^{\tau - 1} q_t \left( z_t^{(j_0)}(a_t) - \left[ \boldsymbol{\mathrm{LCB}}_{c,t}(a_t) \right]^{(j_0)} \right) \bigg) \\
    &\quad + \sum_{t=1}^{\tau - 1} q_t \alpha \cdot \frac{\widehat{Q}_t}{B} \left( \frac{B}{\widehat{Q}_t} - \frac{B}{Q} \right)
    + O\left( \alpha \left( \frac{1}{Q} + \frac{1}{B} \right) \sum_{t=1}^{\tau - 1} q_t \varepsilon_t \right),
\end{align*}
\end{lemma}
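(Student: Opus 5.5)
Since $\boldsymbol{\lambda}=\alpha\boldsymbol{e}_{j_0}$ with $j_0\in\mathcal{I}_{\text{pack}}$, only coordinate $j_0$ of $f_t$ matters, and $[\boldsymbol{s}]_{j_0}=1$. The first step is therefore to write the sum out exactly:
\[
\sum_{t=1}^{\tau-1} f_t(\alpha\boldsymbol{e}_{j_0}) = \alpha\sum_{t=1}^{\tau-1} q_t\Big(1 - \frac{\widehat{Q}_t}{B}\,[\mathrm{LCB}_{c,t}(a_t)]^{(j_0)}\Big).
\]
The first piece is simply $\alpha Q_{\tau-1}$, where $Q_{\tau-1}:=\sum_{t<\tau}q_t$.

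For the second piece, replace $\widehat{Q}_t$ by $Q$ and account for the error. Write
\[
\frac{\widehat{Q}_t}{B}\,\mathrm{LCB} = \frac{Q}{B}\,\mathrm{LCB} + \frac{\widehat{Q}_t-Q}{B}\,\mathrm{LCB}.
\]
The LCB lies in $[0,1]$ and $|\widehat{Q}_t-Q|=\varepsilon_t$, so the correction term summed over $t$ is at most $\frac{\alpha}{B}\sum_t q_t\varepsilon_t$, which is absorbed into the $O(\cdot)$ term. Next decompose
\[
-\frac{Q}{B}\,\mathrm{LCB}^{(j_0)}_t(a_t) = -\frac{Q}{B}\,z_t^{(j_0)}(a_t) + \frac{Q}{B}\big(z_t^{(j_0)}(a_t) - \mathrm{LCB}_t^{(j_0)}(a_t)\big).
\]
The second part is exactly the confidence-error sum in the statement. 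For the first part, use the stopping condition. At the stopping round $\tau$ the cumulative usage through $\tau$ exceeds $B$, and each round contributes at most $\bar q$ because $z\le 1$ and $q_t\le\bar q$. Hence $\sum_{t<\tau} q_t z_t^{(j_0)}(a_t) \ge B-\bar q$, and this gives
\[
-\frac{Q}{B}\sum_{t<\tau} q_t z_t^{(j_0)}(a_t) \le -Q + \frac{Q}{B}\,\bar q.
\]
This produces the terms $Q_{\tau-1}-Q+\frac{Q}{B}\bar q$. The lemma's hypothesis that the sum up to $\tau-1$ already exceeds $B$ gives an even stronger bound, so the $\bar q$ slack covers both readings of $\tau$.

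What remains is the term $\sum_t q_t\alpha\,\frac{\widehat{Q}_t}{B}\big(\frac{B}{\widehat{Q}_t}-\frac{B}{Q}\big)=\alpha\sum_t q_t\big(1-\widehat{Q}_t/Q\big)$. It is $O\big(\frac{\alpha}{Q}\sum_t q_t\varepsilon_t\big)$ in magnitude, so adding it on the right costs at most another error term of the same order. I would include it explicitly so that it cancels the matching term in Lemma~\ref{lem:budget-residual} when the lemmas are combined. The $\frac{1}{Q}$ part of the $O$ term absorbs whatever is needed to keep the inequality valid after adding this term. Concretely, I rewrite $\alpha Q_{\tau-1}$ as this term plus $\alpha\sum_t q_t\widehat{Q}_t/Q$, and bound the difference between the latter and $\alpha Q_{\tau-1}$ by $\frac{\alpha}{Q}\sum_t q_t\varepsilon_t$.

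The high-probability qualifier comes only from Lemma~2 (the confidence bounds). It guarantees the LCB lies in $[0,1]$ and that $z-\mathrm{LCB}$ is controlled later; the current inequality is otherwise deterministic.

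The main obstacle is the bookkeeping of the prediction-error terms: matching the $\widehat{Q}_t$-versus-$Q$ substitutions so that exactly the displayed residual term $\frac{\widehat{Q}_t}{B}\big(\frac{B}{\widehat{Q}_t}-\frac{B}{Q}\big)$ survives, and checking that every other discrepancy is at most $\alpha\big(\frac1Q+\frac1B\big)q_t\varepsilon_t$ with bounded constants. This relies on the LCB being at most 1, and on $q_t$ being nonnegative so that signs do not flip.
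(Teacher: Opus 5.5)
Your proposal is correct and follows essentially the paper's route. You expand $f_t(\alpha\boldsymbol{e}_{j_0})$ on the single packing coordinate and split it into the normalization error $\frac{B}{\widehat{Q}_t}-\frac{B}{Q}$, the confidence gap $z_t^{(j_0)}(a_t)-[\mathrm{LCB}_{c,t}(a_t)]^{(j_0)}$, and the $\varepsilon_t$ prediction errors, controlled via $0\le\mathrm{LCB}\le 1$, $|\widehat{Q}_t-Q|\le\varepsilon_t$, and the stopping condition with the $\frac{Q}{B}\bar q$ slack. One small correction: $\mathrm{LCB}\in[0,1]$ holds by construction (clipping at $0$ and $\hat z\le 1$), not via Lemma~2, so the inequality is in fact fully deterministic.
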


\begin{proof}[Proof Sketch]
We consider a stopping time \( \tau \le T \) such that a budget constraint is violated in coordinate \( j_0 \in \mathcal{I}_{\text{pack}} \), i.e., $\sum_{t=1}^{\tau - 1} q_t z_t^{(j_0)}(a_t) > B.$
Letting \( \boldsymbol{\lambda} = \alpha \boldsymbol{e}_{j_0} \) for some \( \alpha \in [0, T^{1/4}] \), we analyze the cumulative dual-adjusted cost residual. We decompose the expression into:  
(i) the normalization error from \( \frac{B}{\widehat{Q}_t} - \frac{B}{Q} \),  
(ii) the empirical budget violation \( \sum_t q_t \left( z_t^{(j_0)}(a_t) - \left[ \boldsymbol{\mathrm{LCB}}_{c,t}(a_t) \right]^{(j_0)} \right) \),  
and (iii) prediction errors \( \varepsilon_t \).  Using standard concentration bounds and boundedness assumptions, we derive the stated upper bound.
\end{proof}

Then the following theorem provides a high-probability regret upper bound for Algorithm 1:
\begin{theorem}[High-Probability Regret Bound under Time-Varying Demand]\label{thm:combined-regret}
With probability at least $1 - 3KT\delta$, the algorithm satisfies the following regret bound:
\begin{align*}
    &\mathrm{OPT}_{\mathrm{LP}} - \sum_{t=1}^{\tau - 1} q_t R_t 
\le O \bigg( \log\left( \frac{1}{\delta} \right) 
\Big( \mathrm{OPT}_{\mathrm{LP}} \sqrt{ \frac{\bar{q} K}{B} } 
\\&\quad+ \sqrt{ \bar{q} K \cdot \mathrm{OPT}_{\mathrm{LP}} } \Big) 
+ \left( \frac{1}{Q} + \frac{1}{B} \right) \sum_{t=1}^{\tau - 1} q_t \epsilon_t + M T^{3/4} \bigg),
\end{align*}
where $Q = \sum_{t=1}^T q_t$, $\bar{q} = \max_t q_t$, and $\epsilon_t = |\hat{Q}_t - Q|$.
\end{theorem}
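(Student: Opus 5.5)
The plan is to run the standard primal–dual bandits-with-knapsacks argument, chaining Lemmas~\ref{lem:ogd-gap}, \ref{lem:budget-residual} and the budget-violation lemma, and then converting UCB rewards into realized rewards by concentration. I read $\sum_{t<\tau} q_t R_t$ in the statement as the realized reward $\sum_{t<\tau} q_t r_t(a_t)$. We work on the good event of Lemma~2, which holds with probability at least $1-3KTd\delta$ and on which all confidence bounds are valid.

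\textbf{Step 1 (optimism).} Since $\mathrm{UCB}_{r,t}\ge \boldsymbol r$ coordinatewise on the good event, we have
\[
\mathrm{OPT}_{\mathrm{LP}} = Q\,\boldsymbol r^\top\boldsymbol u^* \le \sum_{t=1}^{\tau-1} q_t\,\mathrm{UCB}_{r,t}^\top\boldsymbol u^* + \sum_{t=\tau}^{T} q_t\,\boldsymbol r^\top \boldsymbol u^*.
\]
The second sum is at most $(Q-Q_{\tau-1})\,\mathrm{OPT}_{\mathrm{LP}}/Q$, where $Q_{\tau-1}=\sum_{t<\tau} q_t$. It vanishes if $\tau=T+1$.

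\textbf{Step 2 (primal–dual chain).} Apply Lemma~\ref{lem:ogd-gap} with a comparator $\boldsymbol\lambda\in\mathcal S$, and lower bound $\sum_t R_t$ by Lemma~\ref{lem:budget-residual}. This gives
\[
\sum_{t<\tau} q_t\bigl(\mathrm{UCB}_{r,t}^\top\boldsymbol u^*-\mathrm{UCB}_{r,t}(a_t)\bigr) \le \sum_{t<\tau} f_t(\boldsymbol\lambda) - \sum_{t<\tau} q_t\Bigl(1-\tfrac{\widehat Q_t}{Q}\Bigr)\|\boldsymbol\lambda_t\|_1 + O(MT^{3/4}).
\]
Because $\|\boldsymbol\lambda_t\|_1\le T^{1/4}$ and $|1-\widehat Q_t/Q|=\varepsilon_t/Q$, the normalization term is absorbed into the error term $\bigl(\tfrac1Q+\tfrac1B\bigr)\sum_t q_t\varepsilon_t$.

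\textbf{Step 3 (case split on $\tau$).} There are two cases.
\begin{itemize}
\item If $\tau=T+1$ (no budget exhausted), take $\boldsymbol\lambda=\boldsymbol 0$. Then $f_t(\boldsymbol 0)=0$ and the chain closes directly.
\item If some $j_0\in\mathcal I_{\mathrm{pack}}$ is exhausted, take $\boldsymbol\lambda=\alpha\boldsymbol e_{j_0}$ and invoke the budget-violation lemma. Its leading term $\alpha(Q_{\tau-1}-Q)$ is negative. Choosing $\alpha=\mathrm{OPT}_{\mathrm{LP}}/Q$ (assuming $\mathrm{OPT}_{\mathrm{LP}}/Q\le 1\le T^{1/4}$, since rewards lie in $[0,1]$) makes it cancel exactly the leftover $(Q-Q_{\tau-1})\,\mathrm{OPT}_{\mathrm{LP}}/Q$ from Step~1.
\end{itemize}
In the second case, what remains is:
\begin{itemize}
\item $\alpha\frac{Q}{B}\bar q\le \bar q\,\mathrm{OPT}_{\mathrm{LP}}/B$, which is lower order than the $\mathrm{OPT}_{\mathrm{LP}}\sqrt{\bar qK/B}$ term when $\bar q K\le B$;
\item the confidence-width term $\frac{\mathrm{OPT}_{\mathrm{LP}}}{B}\sum_t q_t\bigl(z_t^{(j_0)}(a_t)-\mathrm{LCB}^{(j_0)}_{c,t}(a_t)\bigr)$;
\item the prediction-error terms, which are again of the stated form.
\end{itemize}

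\textbf{Step 4 (confidence widths).} Two sums of widths remain: $\sum_t q_t\bigl(\mathrm{UCB}_{r,t}(a_t)-r_t(a_t)\bigr)$ and $\sum_t q_t\bigl(z_t^{(j_0)}(a_t)-\mathrm{LCB}_{c,t}^{(j_0)}(a_t)\bigr)$. I would split each into a martingale part, controlled by Freedman/Azuma with increments bounded by $\bar q$, plus the sum of radii. For the radii, use $\mathrm{rad}(v,N,\delta)\lesssim\sqrt{v\log(1/\delta)/N}+\log(1/\delta)/N$, then bound $q_t\le\bar q$ and apply the standard pigeonhole/Cauchy–Schwarz estimate $\sum_t\sqrt{v_{a_t}/N_t(a_t)}\le\sqrt{K\sum_t v_{a_t}}$. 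This gives:
\begin{itemize}
\item for the reward, a bound of order $\log(1/\delta)\sqrt{\bar qK\cdot\mathrm{OPT}_{\mathrm{LP}}}$, using that realized reward is at most $\mathrm{OPT}_{\mathrm{LP}}$ up to lower-order terms;
\item for the cost, a bound of order $\log(1/\delta)\sqrt{\bar qKB}$, since cumulative consumption up to $\tau$ is about $B$. Multiplied by $\mathrm{OPT}_{\mathrm{LP}}/B$, this yields $\mathrm{OPT}_{\mathrm{LP}}\sqrt{\bar qK/B}$.
\end{itemize}
The $\log(1/\delta)/N$ parts give $O(\bar qK\log T\log(1/\delta))$, which is lower order.

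\textbf{Main obstacle.} The hardest step is Step~4, specifically getting the self-normalizing bound $\sqrt{\bar qK\cdot\mathrm{OPT}_{\mathrm{LP}}}$ rather than $\sqrt{\bar q KT}$. This requires bounding $\sum_t q_t\hat r_t(a_t)$ by the realized reward plus widths and solving the resulting quadratic inequality in $\sqrt{\cdot}$, following \cite{agrawal2014bandits,lyu2023bandits}. A secondary difficulty is checking that the $\alpha$-dependent terms in Step~3 stay within $\mathcal S$ and that the $q_t$ weighting does not inflate the $\bar q$ dependence beyond $\sqrt{\bar q}$.
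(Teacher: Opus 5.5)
Your route is the paper's: the same case split on $\tau$, $\boldsymbol\lambda=\boldsymbol 0$ when no budget is exhausted, and $\boldsymbol\lambda=(\mathrm{OPT}_{\mathrm{LP}}/Q)\,\boldsymbol e_{j_0}$ otherwise. That last choice makes the $\alpha(Q_{\tau-1}-Q)$ term of Lemma~5 cancel the reward attributed to unserved demand. The paper writes $\boldsymbol 1$ in place of $\boldsymbol e_{j_0}$, but yours is the choice Lemma~5 actually supports.

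Your Step~4 supplies what the paper's sketch leaves out. The paper only bounds the gap to $\sum_t q_t\,\mathrm{UCB}_{r,t}(a_t)$. The $\mathrm{OPT}_{\mathrm{LP}}\sqrt{\bar qK/B}$ and $\sqrt{\bar qK\,\mathrm{OPT}_{\mathrm{LP}}}$ terms in the statement only arise from the width summation and UCB-to-realized conversion you describe. Two details matter there. You need Freedman rather than Azuma for the martingale part. You also need the split $q_t\sqrt{v/N}=\sqrt{q_t v}\,\sqrt{q_t/N}$ before Cauchy--Schwarz, to get $\sqrt{\bar q}$ rather than $\bar q$.

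One step does not go through as written, and the paper's Case~2 makes the same leap. In Step~2, the normalization term you move to the right-hand side is bounded only by
\[
\sum_{t<\tau} q_t\Bigl|1-\frac{\widehat Q_t}{Q}\Bigr|\,\|\boldsymbol\lambda_t\|_1 \le \frac{T^{1/4}}{Q}\sum_{t<\tau} q_t\varepsilon_t ,
\]
because all you know about the OGD iterates is $\|\boldsymbol\lambda_t\|_1\le T^{1/4}$.

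This term involves the iterates $\boldsymbol\lambda_t$, not the comparator. So no choice of comparator shrinks it, including $\boldsymbol 0$ or $\alpha\le 1$. A one-sided forecast error does not make it vanish either: in the lower bound on $R_t$, packing and covering coordinates carry $1-\widehat Q_t/Q$ with opposite signs.

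Hence the term is not $O\bigl((\frac1Q+\frac1B)\sum_t q_t\varepsilon_t\bigr)$ in general. It is absorbed only under extra conditions, for example:
\begin{itemize}
\item $B\lesssim Q\,T^{-1/4}$, in which case it folds into the $\frac1B$ part;
\item $\max_t\varepsilon_t\le M\sqrt T$, in which case it is at most $MT^{3/4}$.
\end{itemize}
Otherwise you must either keep a factor $T^{1/4}$ on the prediction-error term, or prove separately that the dual iterates stay $O(1)$. Nothing in Lemmas~3--5 provides that.
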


\begin{proof}[Proof Sketch]
We consider two cases depending on whether any budget constraint is violated before horizon $T$.

\textbf{Case 1: $\tau \le T$ (some budget is violated before the horizon ends).} In this case, 
\begin{equation}
\sum_{t=1}^{\tau - 1} q_t \cdot \boldsymbol{\mathrm{UCB}}_{r,t}^\top \boldsymbol{u}^* 
\geq \sum_{t=1}^{\tau - 1} q_t \boldsymbol{r}_t^\top \boldsymbol{u}^* 
= \mathrm{OPT}_{\mathrm{LP}} \cdot \frac{Q_{\tau - 1}}{Q},\nonumber
\end{equation}

We begin by decomposing the regret between the LP benchmark and the cumulative reward of the algorithm:
\begin{equation}
    \mathrm{OPT}_{\mathrm{LP}} \cdot \frac{Q_{\tau-1}}{Q} - \sum_{t=1}^{\tau-1} q_t \boldsymbol{\mathrm{UCB}}_{r,t}(a_t). \nonumber
\end{equation}
Choosing \( \boldsymbol{\lambda} = \mathrm{OPT}_{\mathrm{LP}}/Q \cdot \boldsymbol{1} \le 1 \), and combining Lemmas~3–5, and rearranging terms, we can derive the results of Theorem 1.

\textbf{Case 2: $\tau = T+1$ (no packing budget is violated and all rounds are used).}
With probability $\ge 1 - 3KT\delta$, we have 
\begin{equation}
    \sum_{t=1}^{\tau - 1} q_t \cdot \boldsymbol{\mathrm{UCB}}_{r,t}^\top \boldsymbol{u}^* 
\geq \sum_{t=1}^{\tau - 1} q_t \boldsymbol{r}_t^\top \boldsymbol{u}^* =  \mathrm{OPT}_{\mathrm{LP}} \nonumber
\end{equation}

By primal-dual optimality, we know that
\[
\sum_{t=1}^{\tau - 1} q_t \boldsymbol{r}_t^\top \boldsymbol{u}^* = \mathrm{OPT} \quad \text{and} \quad \sum_{t=1}^{\tau - 1} q_t \boldsymbol{z}_t^\top \boldsymbol{u}^* \le B.
\]
Using Lemma~3 to control the gap between $\boldsymbol{\mathrm{UCB}}_{r,t}(a_t)$ and $\boldsymbol{\mathrm{UCB}}_{r,t}^\top \boldsymbol{u}^*$, and Lemma~4 with $\boldsymbol{\mu} = \boldsymbol{0}$, we obtain:
\[
\mathrm{OPT} - \sum_{t=1}^{\tau - 1} q_t \boldsymbol{\mathrm{UCB}}_{r,t}(a_t)
\le O\left( \frac{1}{Q} \sum_{t=1}^{\tau - 1} q_t \varepsilon_t + M T^{3/4} \right).
\]
Further combining two cases, we arrive at the final high-probability regret bound.
\end{proof}

The following theorem provides a high-probability constraint violation upper bound for Algorithm~1:
\begin{theorem}
\label{thm-upperbound-1}
Consider the COPAC-UCB algorithm, that is provided with predictions that satisfy $|\widehat{Q}_t - Q| \le \varepsilon_t$ for all $t \in [\tau - 1]$. With probability $\ge 1 - 3KTd \delta$, for all $i \in \mathcal{I}_{\text{cover}}$,
\begin{align*}
    &\max\left( 0,\ B - \sum_{t=1}^{\tau -1} q_t z_t^{(i)}(a_t) \right)
    \\&\le O\left( T^{-1/4} Q + M T^{3/4}
    + \frac{1}{B} \sum_{t=1}^{\tau -1} q_t \varepsilon_t
    + \frac{Q}{B} \sqrt{T} \right).
\end{align*}
\end{theorem}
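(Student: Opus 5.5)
The plan is to reuse the primal--dual inequality of Lemma~\ref{lem:ogd-gap}, now instantiated with a dual vector concentrated on a single covering coordinate. Fix $i\in\mathcal{I}_{\text{cov}}$ and take $\boldsymbol{\lambda}=T^{1/4}\boldsymbol{e}_i\in\mathcal{S}$. For covering coordinates $[\boldsymbol{s}]_i=-1$ and $[\boldsymbol{z}^{\text{conf}}_t]_i=-\mathrm{UCB}^{(i)}_{c,t}$, so
\[
f_t(T^{1/4}\boldsymbol{e}_i)=T^{1/4}\,\frac{q_t\widehat Q_t}{B}\Big(\mathrm{UCB}^{(i)}_{c,t}(a_t)-\frac{B}{\widehat Q_t}\Big)=T^{1/4}\Big(\frac{\widehat Q_t}{B}q_t\,\mathrm{UCB}^{(i)}_{c,t}(a_t)-q_t\Big).
\]
Thus $-\sum_t f_t(T^{1/4}\boldsymbol{e}_i)$ is, up to the factor $T^{1/4}\widehat Q_t/B$, the shortfall $B-\sum_t q_t z^{(i)}_t(a_t)$. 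The goal is to sandwich $\sum_{t<\tau}f_t(T^{1/4}\boldsymbol{e}_i)$ between the left-hand side of Lemma~\ref{lem:ogd-gap}, which is bounded below, and an expression containing $-T^{1/4}(Q/B)\,V_i$.

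First I lower bound the left-hand side of Lemma~\ref{lem:ogd-gap}.
\begin{itemize}
\item The reward gap $\sum_t q_t(\mathrm{UCB}_{r,t}^\top\boldsymbol{u}^*-\mathrm{UCB}_{r,t}(a_t))$ is at least $-\sum_t q_t\ge -Q$, since UCBs lie in $[0,1]$.
\item For $\sum_t R_t$ I use Lemma~\ref{lem:budget-residual}. Its term $\sum_t q_t(1-\widehat Q_t/Q)\|\boldsymbol{\lambda}_t\|_1$ is at least $-T^{1/4}\frac1Q\sum_t q_t\varepsilon_t$, using $\|\boldsymbol{\lambda}_t\|_1\le T^{1/4}$.
\end{itemize}
Plugging these into Lemma~\ref{lem:ogd-gap} gives
\[
-\sum_{t<\tau}f_t(T^{1/4}\boldsymbol{e}_i)\le Q+T^{1/4}\frac1Q\sum_t q_t\varepsilon_t+O(MT^{3/4}).
\]

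Next I expand the left side. Replace $\widehat Q_t$ by $Q$ at a cost of $T^{1/4}\frac1B\sum_t q_t\varepsilon_t$. Then use $\mathrm{UCB}^{(i)}_{c,t}(a_t)\ge z^{(i)}(a_t)$ (the covering case of the confidence lemma) and replace $z^{(i)}(a_t)$ by the realized $z^{(i)}_t(a_t)$. The resulting martingale difference $\sum_t q_t(z^{(i)}(a_t)-z^{(i)}_t(a_t))$ is bounded by Azuma as $O(\bar q\sqrt{T\log(1/\delta)})$, which after the $Q/B$ scaling produces the $\frac QB\sqrt T$ term. This yields
\[
-\sum_{t<\tau} f_t(T^{1/4}\boldsymbol{e}_i)\ge T^{1/4}\Big(\sum_{t<\tau}q_t-\frac QB\sum_{t<\tau}q_tz_t^{(i)}(a_t)\Big)-T^{1/4}\,\mathrm{err},
\]
where $\mathrm{err}$ collects the prediction and concentration terms. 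The right-hand bracket is $\frac QB\big(B-\sum_t q_tz^{(i)}_t(a_t)\big)-(Q-Q_{\tau-1})$, i.e., the scaled shortfall minus a leftover.

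Dividing by $T^{1/4}Q/B$ should give $V_i\le \frac BQ\big(T^{-1/4}Q+\ldots\big)$. I would then loosen the $B/Q$ factors to obtain the stated form, using $B\le Q$ for covering feasibility of the LP, and absorb the remaining pieces into $O(MT^{3/4})$ and $\frac1B\sum_t q_t\varepsilon_t$.

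The main obstacle is the leftover $Q-Q_{\tau-1}$ when $\tau\le T$, i.e., when the algorithm halts early on a packing constraint. Covering violation after halting is then uncontrolled. I would try to handle it exactly as in Case~1 of Theorem~\ref{thm:combined-regret}: apply the packing-violation lemma with $\boldsymbol{\lambda}=\alpha\boldsymbol{e}_{j_0}$ to show that $Q-Q_{\tau-1}$ is itself $O(\bar q+MT^{3/4}+\frac QB\sqrt T+\frac1B\sum_t q_t\varepsilon_t)$. Failing that, I would interpret the statement's sum up to $\tau-1$ with $B$ rescaled by $Q_{\tau-1}/Q$. A union bound over $i$ and over the confidence events yields the probability $1-3KTd\delta$.
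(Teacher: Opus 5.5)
Your outline follows the paper's proof: $\boldsymbol{\lambda}=T^{1/4}\boldsymbol{e}_i$ in Lemma~\ref{lem:ogd-gap}, the trivial $-Q$ bound on the reward gap, Lemma~\ref{lem:budget-residual} for $\sum_t R_t$, the swap $\widehat Q_t\to Q$, division by $T^{1/4}Q/B$, and $B\le Q$. However, the step that turns the played arm's covering UCB into true consumption goes the wrong way. You need a \emph{lower} bound on
\[
-\sum_{t<\tau}f_t(T^{1/4}\boldsymbol{e}_i)=T^{1/4}\sum_{t<\tau}\Big(q_t-\frac{\widehat Q_t}{B}\,q_t\,\mathrm{UCB}^{(i)}_{c,t}(a_t)\Big),
\]
and hence an \emph{upper} bound on $\mathrm{UCB}^{(i)}_{c,t}(a_t)$. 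Optimism, $\mathrm{UCB}^{(i)}_{c,t}(a_t)\ge z^{(i)}(a_t)$, only gives $-\sum_t f_t\le T^{1/4}\sum_t(q_t-\frac{\widehat Q_t}{B}q_tz^{(i)}(a_t))$. That cannot be chained with the upper bound you got from Lemma~\ref{lem:ogd-gap}. Optimism is what Lemma~\ref{lem:budget-residual} uses for $\boldsymbol{u}^*$; for the played arms you need the \emph{width} of the interval. On the good event, $\mathrm{UCB}^{(i)}_{c,t}(a)\le z^{(i)}(a)+2\,\mathrm{rad}(\hat z^{(i)}_t(a),N_t(a),\delta)$. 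Grouping rounds by which arm is played then gives
\[
\sum_{t<\tau}q_t\big(\mathrm{UCB}^{(i)}_{c,t}(a_t)-z^{(i)}(a_t)\big)\le 2\bar q\sum_{a\in\mathcal{A}}\sum_{n=1}^{N_\tau(a)}\Big(\sqrt{\tfrac{2\log(1/\delta)}{n}}+\tfrac{4\log(1/\delta)}{n}\Big)=O\big(\bar q\sqrt{KT\log(1/\delta)}+\bar qK\log T\log(1/\delta)\big).
\]

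After the $Q/B$ scaling, this width sum, not Azuma alone, is the real source of the $\frac QB\sqrt T$ term. It is what the deviation term $\frac QB O(\|\cdot\|)$ in the paper's lower bound on the left-hand side accounts for. Azuma only handles realized versus mean consumption, as you said, and with this correction the rest of your chain goes through. Lemma 2 as printed says $\mathrm{UCB}^{(i)}_{c,t}(a)\le z^{(i)}(a)$ for covering coordinates, which would make your step formally valid, but that cannot hold for an upper confidence bound, so do not rely on it.

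Your concern about $Q-Q_{\tau-1}$ is justified. The paper's sketch writes $\sum_{t<\tau}q_t-\frac QB\sum_{t<\tau}q_tz^{(i)}_t(a_t)=\frac QB[B-\sum_{t<\tau}q_tz^{(i)}_t(a_t)]$, which silently assumes $\tau=T+1$. Your patch works: combining Lemma~\ref{lem:ogd-gap} with the budget-violation lemma (Lemma 5) at $\alpha=T^{1/4}$ gives
\[
Q-Q_{\tau-1}\le T^{-1/4}Q+O(MT^{1/2})+\frac QB\,\bar q+\frac QB\sum_{t<\tau}q_t\big(z^{(j_0)}_t(a_t)-\mathrm{LCB}^{(j_0)}_{c,t}(a_t)\big)+O\Big(\big(\tfrac1Q+\tfrac1B\big)\sum_{t<\tau}q_t\varepsilon_t\Big),
\]
which after the factor $B/Q$ fits the stated bound. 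The fourth term, however, again needs the same width-sum argument (now for the packing LCB), so the missing ingredient is needed there as well.
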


\begin{proof}[Proof Sketch]
Fix \( i \in \mathcal{I}_{\text{cover}} \) such that \( [\boldsymbol{\lambda}]_i = T^{1/4} \) and all other coordinates are zero. Lemma~3 can then be rewritten as:
\begin{align}
    & \sum_{t=1}^{\tau-1} \left[ \frac{q_t \widehat{Q}_t}{B} \boldsymbol{\mathrm{UCB}}_{c,t}^{(i)}(a_t) - \frac{q_t \widehat{Q}_t}{B} \cdot \frac{B}{\widehat{Q}_t} \right] + O(M T^{3/4}) \nonumber \\
    &\quad \ge \sum_{t=1}^{\tau-1} \bigg( q_t \langle \boldsymbol{\mu}^*, \boldsymbol{r}_t \rangle - q_t \cdot \boldsymbol{\mathrm{UCB}}_{r,t}(a_t)
    + \frac{q_t \widehat{Q}_t}{B} \cdot \lambda_t \nonumber\\&\qquad\qquad- \lambda_t q_t 
    + \frac{q_t \widehat{Q}_t}{B} \lambda_t \langle \boldsymbol{\mu}^*, \boldsymbol{z}_t^{[\mathcal{I}_{\text{cover}}]} \rangle \bigg).
    \label{lemma3_rewritten}
\end{align}

For the left-hand side of \eqref{lemma3_rewritten}, we can upper bound:
\begin{align}
    &\sum_{t=1}^{\tau -1} \frac{q_t \widehat{Q}_t}{B} \left[ \boldsymbol{\mathrm{UCB}}_{c,t}^{(i)}(a_t) - \frac{B}{\widehat{Q}_t} \right]\nonumber
    \\&\ge \sum_{t=1}^{\tau -1} \Big( q_t - \frac{q_t Q}{B} z_t^{(i)}(a_t)
    + \frac{Q}{B} O(\| \boldsymbol{\mathrm{LCB}}_{c,t}^{(i)} - z_t^{(i)} \|)\\
    &+ O\left( \frac{\bar{q} \varepsilon_t}{B} \right) \Big) \nonumber \\
    &= \sum_{t=1}^{\tau -1} \left( \frac{Q}{B} O(\| \boldsymbol{\mathrm{LCB}}_{c,t}^{(i)} - z_t^{(i)} \|)
    + O\left( \frac{\bar{q} \varepsilon_t}{B} \right) \right)\nonumber
    \\&+ \frac{Q}{B} \left[ B - \sum_{t=1}^{\tau -1} q_t z_t^{(i)}(a_t) \right].
    \label{bound_1}
\end{align}

For the right-hand side, we have:
\begin{align}
   & - T^{-1/4} \left( q_t \langle \boldsymbol{\mu}^*, \boldsymbol{r}_t \rangle - q_t \cdot \boldsymbol{\mathrm{UCB}}_{r,t}(a_t) \right)\nonumber
    \\&\ge - {T^{-1/4}} q_t \boldsymbol{\mathrm{UCB}}_{c,t}^{(i)}(a_t)
    \ge - T^{-1/4}{q_t}.
    \label{bound_2}
\end{align}

The term $\lambda_t q_t - \frac{q_t \widehat{Q}_t}{B} \lambda_t \langle \boldsymbol{\mu}^*, \boldsymbol{z}_t^{[\mathcal{I}_{\text{cover}}]} \rangle$ can be bounded as:
\begin{align}
    &\lambda_t q_t - \frac{q_t \widehat{Q}_t}{B} \lambda_t \langle \boldsymbol{\mu}^*, \boldsymbol{z}_t^{[\mathcal{I}_{\text{cover}}]} \rangle\nonumber
    \\&\ge \lambda_t q_t \cdot \frac{\varepsilon_t}{B} \langle \boldsymbol{\mu}^*, \boldsymbol{z}_t^{[\mathcal{I}_{\text{cover}}]} \rangle
    = O\left( \frac{1}{B} \sum_{t=1}^{\tau -1} q_t \varepsilon_t \right).
    \label{bound_3}
\end{align}

Combining Lemma 2, \eqref{bound_1}, \eqref{bound_2}, and \eqref{bound_3}, we derive the violation bound for all $i \in \mathcal{I}_{\text{cover}}$.
\end{proof}

\textbf{Remark:} Theorem~1 (regret bound) and Theorem~2 (constraint violation bound) both reveal the key factors contributing to learning performance. In particular, the estimation error $\epsilon_t$ from demand forecasting directly affects the regret and constraint violation through the predicted task volume $\hat{Q}_t$. The $\widetilde{\mathcal{O}}(M^{3/4})$ term arises from the regret of the OGD used to update the dual variables. The remaining terms are governed by the confidence bounds for $\boldsymbol{r}$ and $\boldsymbol{z}$. These components jointly reflect how uncertainty in system dynamics and predictive advice influences online decision-making.

\section{Experiment}

\subsection{Experimental Setup}

\textbf{Collaborative inference setting:} We evaluate the proposed online constrained offloading algorithm in a collaborative inference system with four candidate LLM families:
\texttt{Gemma2\_2b}, \texttt{Llama3.2\_1b}, \texttt{Qwen2.5\_0.5b}, and \texttt{Qwen2.5\_1.5b}.
At each decision round $t$, the system receives a batch of $q_t$ tasks and selects an inference option (i.e., one of the candidate LLMs, or its quantized variant) to serve the tasks.
Each option exhibits heterogeneous and uncertain trade-offs in (i) task accuracy, (ii) inference latency (in seconds), (iii) cost per 1k tokens (USD), and (iv) the average response length (in tokens), which together induce time-varying constrained decision making.

\textbf{Trace-driven model profiling from edge measurements:} Rather than relying on hand-crafted or synthetic model profiles, we build a trace-driven simulator grounded in real measurements reported in \cite{husom2025sustainable}. The measurement traces provide dataset-specific accuracy, end-to-end inference latency, and output-length statistics for a diverse set of quantized variants drawn from commonly deployed lightweight LLM families. We use these traces to instantiate the stochastic reward-and-cost model in our evaluation: for each inference option, task correctness is modeled according to the measured accuracy, the per-request delay is sampled from the corresponding empirical latency distribution, and the token-level output length follows the measured response-length statistics. The monetary cost of each request is then computed under a unified cost-per-1k-tokens accounting rule based on the sampled output length. This trace-driven construction preserves realistic accuracy--latency--cost trade-offs observed in edge deployments, enabling a faithful evaluation of online decisions under uncertainty.


\textbf{Constraints:} We consider a budget-constrained and delay-sensitive deployment scenario reflecting practical service-level objectives. The system operates under a packing constraint of a maximum total monetary budget of \$8,000, which limits the total cumulative cost of serving all tasks. Concurrently, we impose a covering constraint on inference latency: at least 80\% of all tasks must be served within 180s, enforcing that more than half of the responses meet low-latency requirements typical for user-facing applications.

\textbf{Dynamic Workloads:}
We construct a dynamic task workload based on the HumanEval benchmark, which contains 164 code-generation tasks and is used without subsampling. At each round $t$, a batch of $q_t$ tasks arrives and must be served by the selected inference option. To capture both stationary and temporally correlated demand variations, we evaluate our algorithm under two arrival patterns: (i) i.i.d. arrivals, where $\{q_t\}$ are independently generated across rounds, and (ii) autoregressive arrivals following an AR(1) process, where the current demand depends on the previous round to emulate bursty or persistent traffic. 

Across all settings, we adopt a unified demand estimation strategy to approximate the total demand $\hat{Q}_t$. The estimator combines historical observations $\{q_s\}_{s=1}^{t-1}$ with forecasts of future demands $\{\hat{q}_s\}$ produced by standard time-series prediction methods. To reduce computational overhead, we employ a “power-of-two” update rule: $\hat{Q}_t$ is updated only when $t$ is a power of two (i.e., $t=2^k$ for some $k\in\mathbb{N}^+$); otherwise, the previous estimate is reused. When an update is triggered, $\hat{Q}_t$ is computed by aggregating the realized past demand and the predicted future demand, providing an efficient yet stable approximation that supports online decision making under time-varying workloads.


\subsection{Benchmark Algorithms}
We compare our COPAC-UCB algorithm with three representative baselines, all extended to handle both packing and covering constraints but without demand prediction:
\begin{itemize}
\item \textbf{AD-UCB:} A UCB-based algorithm for BwCR that formulates the problem as an online convex program and uses optimistic estimates to balance exploration and exploitation under convex constraints~\cite{agrawal2014bandits}. In our setting, AD-UCB handles both packing and covering constraints as linear constraints but lacks the ability to predict future demand fluctuations.

\item \textbf{PD-BwK:} A primal-dual algorithm for BwK that maintains resource prices via multiplicative updates and selects arms greedily based on estimated reward-to-cost ratios~\cite{badanidiyuru2018bandits}. Although efficient and competitive in stationary settings, it is conservative and adapts poorly to non-stationary or covering-constrained environments.

\item \textbf{SW-UCB:} SW-UCB extends the classical UCB framework to Non-Stationary Bandits with Knapsacks by employing a sliding-window estimator that focuses only on recent observations~\cite{liu2022non}. This enables SW-UCB to adapt to abruptly changing environments, but its short memory can lead to overly cautious resource utilization and suboptimal SLA compliance in predictable, gradually changing scenarios like ours.
\end{itemize}
These baselines collectively illustrate the trade-offs between optimism, conservatism, and short-term adaptability in constrained bandit learning.


\begin{table}[]
\caption{Collaborative Inference Symstem Parameters}
\begin{tabular}{|c|c|c|c|c|}
\hline
LLM           & \begin{tabular}[c]{@{}c@{}}Accuracy\\ (\%)\end{tabular} & \begin{tabular}[c]{@{}c@{}}Delay\\ (s)\end{tabular} & \begin{tabular}[c]{@{}c@{}}Cost (\$ \\ 1k tokens)\end{tabular} & \begin{tabular}[c]{@{}c@{}}Average\\ response length\end{tabular} \\ \hline
Gemma2\_2b    & 0.77                                                    & 281.92                                              & 0.005                                                          & 168.29                                                            \\ \hline
Llama3.2\_1b  & 0.84                                                    & 84.43                                               & 0.015                                                          & 124.67                                                            \\ \hline
Qwen2.5\_0.5b & 0.54                                                    & 41.05                                               & 0.001                                                          & 209.98                                                            \\ \hline
Qwen2.5\_1.5b & 0.27                                                    & 111.53                                              & 0.003                                                          & 130.84                                                            \\ \hline
\end{tabular}
\label{table:Inference_Symstem}
\end{table}



\begin{figure*}[th]
	\centering
	\subfigure[Estimation Error]{
		\begin{minipage}[b]{0.33\textwidth}
			\includegraphics[width=0.95\textwidth]{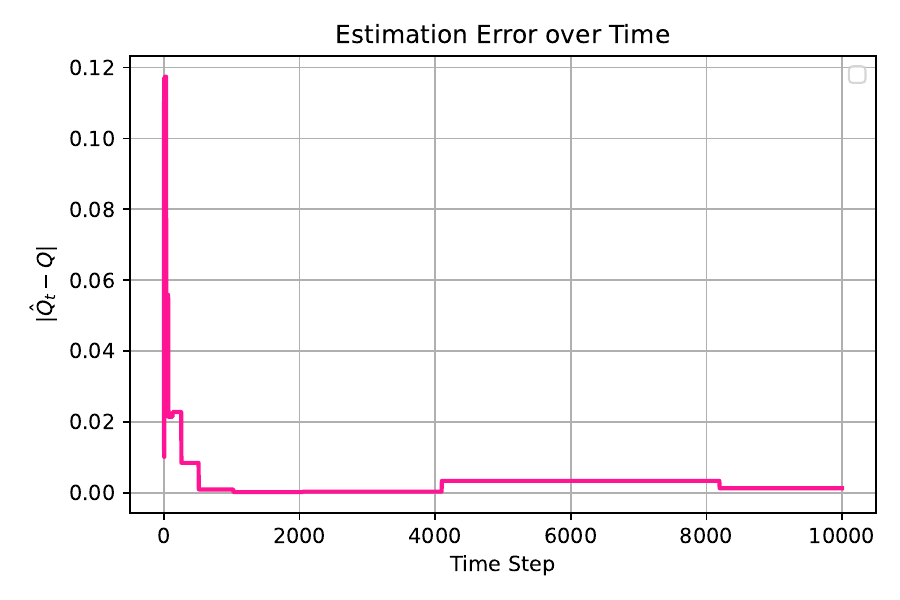}
		\end{minipage}
		\label{fig:time_predict_iid}
	}
	\hspace{-6mm}
    	\subfigure[Cumulative Regret]{
    		\begin{minipage}[b]{0.33\textwidth}
   		 	\includegraphics[width=0.95\textwidth]{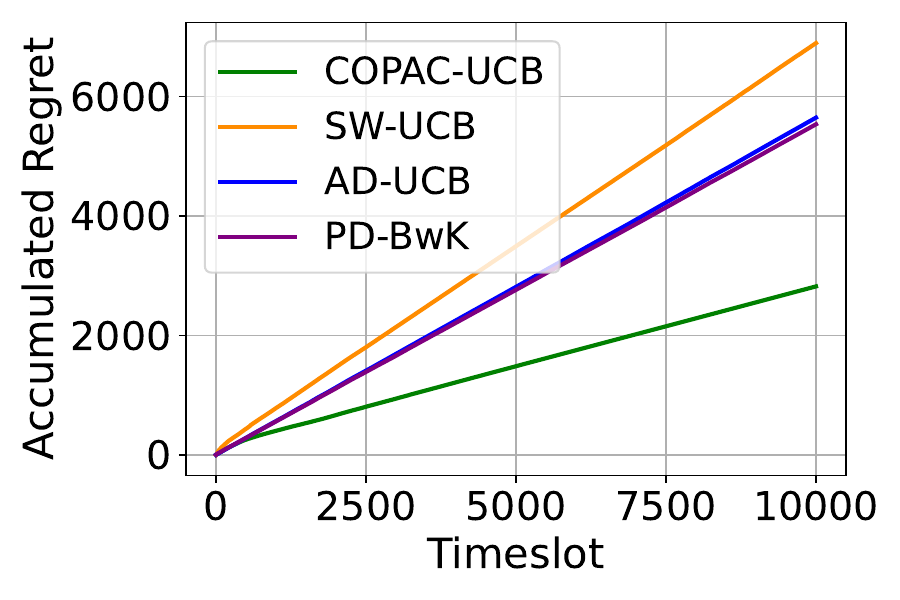}
    		\end{minipage}
		\label{fig:regret_iid}
    	}
    \hspace{-6mm}
    	\subfigure[Constraint Violation]{
		    \begin{minipage}[b]{0.33\textwidth}
   	 	    \includegraphics[width=0.95\textwidth]{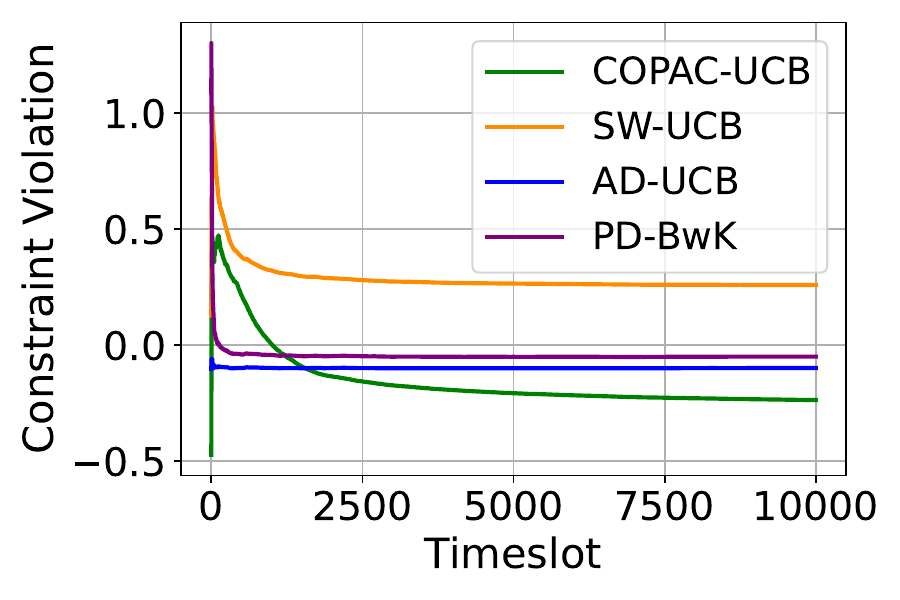}
		    \end{minipage}
	    \label{fig:reward_iid}
	    }
	\caption{Performance with i.i.d model.}
	\label{iid}
\end{figure*}

\begin{figure*}[th]
	\centering
	\subfigure[Estimation Error]{
		\begin{minipage}[b]{0.33\textwidth}
			\includegraphics[width=0.95\textwidth]{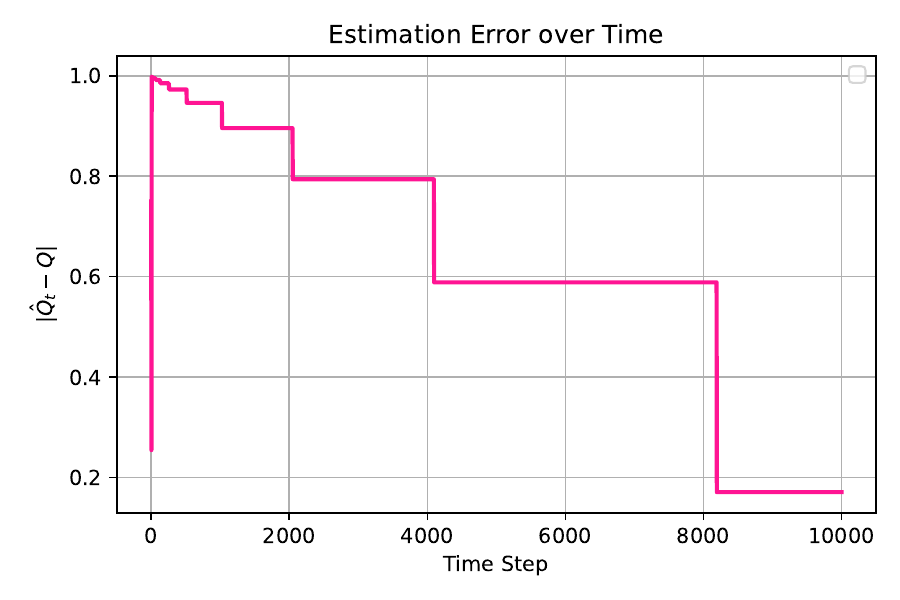}
		\end{minipage}
		\label{fig:time_predict}
	}
	\hspace{-6mm}
    	\subfigure[Cumulative Regret]{
    		\begin{minipage}[b]{0.33\textwidth}
   		 	\includegraphics[width=0.95\textwidth]{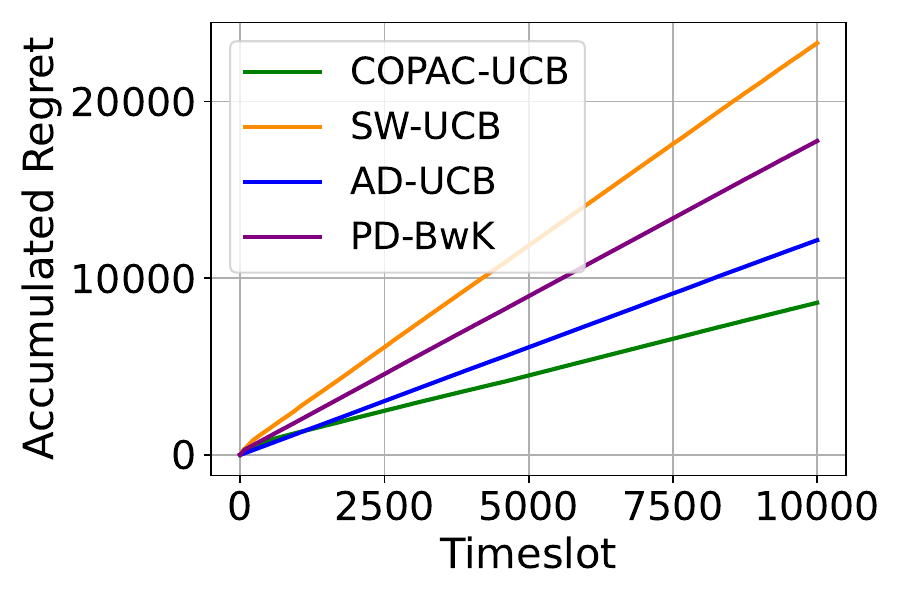}
    		\end{minipage}
		\label{fig:regret_linear}
    	}
    \hspace{-6mm}
    	\subfigure[Constraint Violation]{
		    \begin{minipage}[b]{0.33\textwidth}
   	 	    \includegraphics[width=0.95\textwidth]{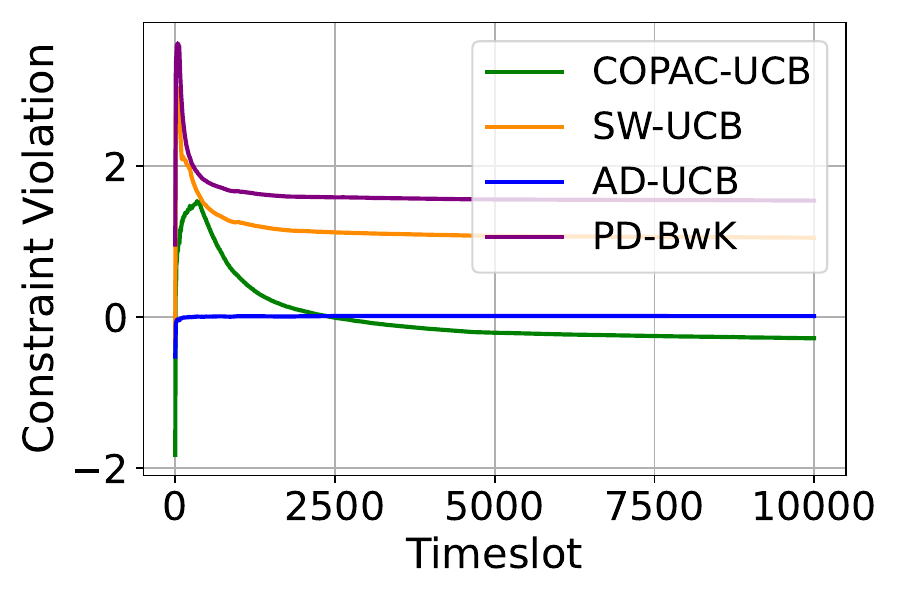}
		    \end{minipage}
	    \label{fig:reward_linear}
	    }
	\caption{Performance with AR (1) model.}
	\label{linear}
\end{figure*}

\subsection{Performance with i.i.d. Demand}

We first evaluate the algorithms under an i.i.d. demand setting, where each $q_t$ is independently drawn from a Gaussian distribution with mean 2 and variance 0.5. The experimental results are presented in Fig~2, which reports the estimation error of $Q$ over time (Fig.~2a), cumulative regret (Fig.~2b), and constraint violation (Fig.~2c).

In Fig.~2a, we observe that the estimation error of $Q$ decreases rapidly as time progresses, benefiting from the stability of the i.i.d. demand sequence. The power-of-two update policy allows the estimation to converge quickly and maintain low error with negligible computational overhead.

Fig.~2b compares the cumulative regret of the four algorithms. COPAC-UCB achieves the lowest regret throughout the horizon, highlighting the advantage of incorporating predictive estimation and explicitly optimizing for covering constraints. AD-UCB performs second best, leveraging its optimistic convex optimization design, though it lacks foresight and reacts more slowly to constraint trade-offs. PD-BwK exhibits higher regret due to its conservative, primal-dual updates, which underexploit available resources in a stationary setting. SW-UCB performs worst, accumulating the highest regret as its short-memory sliding-window mechanism discards useful long-term information and underutilizes resources unnecessarily.

Fig.~2c shows the average constraint violation over time. COPAC-UCB consistently achieves the smallest (and slightly negative) violation, indicating it not only satisfies the covering constraint but often slightly over-achieves it. AD-UCB maintains moderate violations, while PD-BwK remains more conservative, with higher violations than COPAC-UCB. SW-UCB again performs the worst, with persistently large violations due to its inability to exploit the stationary nature of the environment and its overly cautious allocation strategy.

Overall, these results confirm that in an i.i.d. setting, COPAC-UCB successfully balances exploration, exploitation, and constraint satisfaction through predictive demand estimation and constraint-aware decision-making, outperforming the baselines on both regret minimization and SLA compliance.

\subsubsection*{Performance with AR (1) Linear Demand}
We evaluate algorithm performance under a time-correlated AR(1) demand model, $q_t = \alpha + \beta q_{t-1} + \varepsilon_t$, with $\alpha = 2$, $\beta = 0.5$. Fig~3 summarizes the results: estimation error (Fig.~3a), cumulative regret (Fig.~3b), and constraint violation (Fig.~3c).

As shown in Fig.~3a, the estimation error of cumulative demand $Q$ remains higher and decays more slowly under AR(1) demand compared to the i.i.d. setting (Fig.~2a). This is because temporal correlation causes forecasting errors to propagate across rounds, one inaccurate prediction can distort subsequent estimates. As a result, all algorithms suffer from higher cumulative regret (Fig.~3b) and increased constraint violations (Fig.~3c). While COPAC-UCB continues to outperform the baselines in both cumulative reward and SLA satisfaction, its performance advantage narrows in the AR setting, especially during the early rounds. This is expected: time-correlated demand requires longer adaptation before accurate prediction stabilizes, and the dual update mechanism becomes more conservative. Interestingly, the performance gap between AD-UCB, PD-BwK, and SW-UCB also shrinks, suggesting that none of these baselines effectively exploit temporal structure. Their respective update rules, whether multiplicative (PD-BwK) or sliding-window-based (SW-UCB), lack mechanisms to anticipate future demand shifts, often leading to mismatches between allocation and actual task volume.

In summary, the i.i.d. scenario allows all algorithms (except SW-UCB) to achieve lower regret, likely with sublinear growth due to stationarity, and smaller constraint violations. In contrast, the AR scenario highlights the importance of prediction: COPAC-UCB retains an advantage by leveraging estimation, while the baselines degrade more uniformly.

\section{Conclusion}
We studied the problem of online LLM selection in an offloading system with time-varying task demand. Each LLM has unknown, stochastic accuracy, latency, and cost, and decisions must satisfy hard budget and soft service-level constraints. We modeled this as a stochastic bandit problem with mixed packing and covering constraints under non-stationary demand. Based on this, we proposed a learning-augmented online algorithm that combines confidence-based exploration with sequential prediction to balance reward and constraint satisfaction. We provided theoretical guarantees on regret and constraint violations, and validated our approach through experiments on both synthetic and real-world workloads. Future directions include handling richer prediction feedback, more complex constraints, and adversarial demand.


\bibliographystyle{IEEEtran}
\bibliography{references}

\end{document}